\documentclass[12pt]{article}
\usepackage{amsmath} 
\usepackage{algorithm}
\usepackage{algpseudocode}
\usepackage{amsfonts}   
\usepackage{amssymb}
\usepackage{booktabs}
\usepackage{pifont}
\usepackage{amsthm}
\usepackage{bbm}
\usepackage{bm}
\usepackage{tikz}
\usetikzlibrary{arrows.meta, positioning, shapes.geometric}
\usepackage{multirow}
\usepackage{graphicx}   
\usepackage{hyperref}   
\usepackage{authblk}   
\usepackage{thmtools} 
\usepackage{physics}
\usepackage[margin=0.75in]{geometry} 
\usepackage[
  backend=biber,
  style=numeric,
  sorting=nyt,  
  giveninits=true,
  maxbibnames=9
]{biblatex}
\addbibresource{references.bib}
\usepackage[normalem]{ulem}
\usepackage{mathtools, stmaryrd}

\DeclareMathOperator*{\argmax}{arg\,max}

\usepackage{xparse} \DeclarePairedDelimiterX{\Iintv}[1]{\llbracket}{\rrbracket}{\iintvargs{#1}}
\NewDocumentCommand{\iintvargs}{>{\SplitArgument{1}{,}}m}
{\iintvargsaux#1} 
\NewDocumentCommand{\iintvargsaux}{mm} {#1\mkern1.5mu,\mkern1.5mu#2}
\newenvironment{accolade}{\left\{
   \arraycolsep=1pt
        \begin{aligned}
        }
        {
        \end{aligned}
    \right.
}
\newcommand{\Var}{\mathrm{Var}}

\theoremstyle{definition}

\newtheorem{remark}{Remark}[section]

\declaretheorem[name=Proposition, numberwithin=section]{proposition}

\newtheorem{lemma}{Lemma}[section]

\title{Gradient-based Active Learning with Gaussian Processes for Global Sensitivity Analysis}
\author[1,2]{Guerlain Lambert\thanks{Email: \texttt{guerlain.lambert@ec-lyon.fr}}}
\author[1]{Céline Helbert\thanks{Email: \texttt{celine.helbert@ec-lyon.fr}}}
\author[2]{Claire Lauvernet\thanks{Email: \texttt{claire.lauvernet@inrae.fr}}}
\affil[1]{Institut Camille Jordan, CNRS UMR 5208, École Centrale de Lyon, Écully, France}
\affil[2]{INRAE, RiverLy, 69625 Villeurbanne, France}
\date{December 25, 2025} 

\begin{document}
\maketitle

\begin{abstract}
Global sensitivity analysis of complex numerical simulators is often limited by the small number of model evaluations that can be afforded. In such settings, surrogate models built from a limited set of simulations can substantially reduce the computational burden, provided that the design of computer experiments is enriched efficiently. In this context, we propose an active learning approach that, for a fixed evaluation budget, targets the most informative regions of the input space to improve sensitivity analysis accuracy. More specifically, our method builds on recent advances in active learning for sensitivity analysis (Sobol' indices and derivative-based global sensitivity measures, DGSM) that exploit derivatives obtained from a Gaussian process (GP) surrogate. By leveraging the joint posterior distribution of the GP gradient, we develop acquisition functions that better account for correlations between partial derivatives and their impact on the response surface, leading to a more comprehensive and robust methodology than existing DGSM-oriented criteria. The proposed approach is first compared to state-of-the-art methods on standard benchmark functions, and is then applied to a real environmental model of pesticide transfers.
\end{abstract}
\noindent
{\bf Mathematics Subject Classification:} 62K05; 62P12\\
{\bf Keywords:}  Design of computer experiments; Active Learning; Sensitivity Analysis; Gaussian Process
\vfill

\newpage
\section{Introduction}
In recent decades, numerical experimentation has emerged as a compelling alternative to costly field trials for studying physical phenomena and evaluating the impact of human activities on the environment. However, numerical simulation can be computationally challenging due to detailed modeling, involving extensive computation times and careful parameterization. Moreover, when a model is based on numerous input parameters, it becomes valuable to perform global sensitivity analysis (GSA), which aims either to
reduce the dimensionality of the input space  (screening step) or to classify features by order of importance relative to the model output (ranking step) \cite{gsa, GSA_review}. Kernel-based measures such as HSIC (Hilbert-Schmidt Independence Criterion \cite{hsic}) are most commonly used to detect the subset of influential inputs whereas the Sobol indices \cite{sobol_sensitivity_1993}, based on variance decomposition, are most commonly used for ranking purposes. However, Sobol indices estimation requires a substantial number of model evaluations, which is often impractical in real-world applications.

To address these computational challenges, the use of a metamodel (or surrogate model) built from simulations of the complete model allows for significant reduction in computational burden. Nevertheless, building a surrogate model still requires a set of observations from the computational code, necessitating either space-filling designs of experiments such as low-discrepancy sequences or optimized Latin Hypercube Sampling (LHS) designs \cite{sobol1967distribution,mckay_comparison_1979,QLHS}. When a single model evaluation requires several dozen hours of computation, constructing an experimental design optimally suited to our objectives becomes crucial. For instance, as illustrated in Figure \ref{fig:doe_motiv} on a toy function, errors of metamodel-based estimation of Sobol indices (first-order and total-order) vary significantly with the size and quality of the experimental design (maximin LHS in this case), indicating the existence of optimal designs even with small sample sizes. 
Thus, sequentially enriching the design according to a specific objective function appears to be a promising methodology to find these designs. 
This is the role of active learning. 
Starting from an initial experimental design, 
it appears natural 
to add points that 
provide the most information for getting an accurate Gaussian process (GP) metamodel, and in particular for the quantities needed in sensitivity analysis.
A classical line of work considers sequential designs that aim at reducing the predictive uncertainty of the GP, for instance by minimizing an integrated mean squared error (IMSE) criterion over the input space \cite{IMSE1,IMSE2,IMSE3}. These approaches are well suited to improve the global predictive quality of the surrogate, but they do not directly target GSA indices such as Sobol' or Derivative-based Global Sensitivity Measures (DGSM) \cite{DGSM}. Some propose an acquisition function for the numerator of Sobol indices \cite{gratiet}, but this is costly to implement as there is no closed form. 
Acquisition functions have also been proposed using the framework of Active Subspaces \cite{constantine, wycoff}, which identify low-dimensional directions along which the function varies most. However, these methods primarily target dimensionality reduction rather than the precise estimation of global sensitivity measures. In this work, we focus on DGSM, which quantify the influence of each input variable through the expected squared partial derivatives and provide scalar global importance indices with a direct theoretical connection to Sobol' total indices \cite{DGSM,lamboni}. Their estimation is particularly convenient under a GP surrogate, and has been analyzed in detail in \cite{Lozzo}. This motivates the development of acquisition strategies specifically designed for improving DGSM estimates in the context of GP metamodeling. In this direction, \cite{Belakaria} introduced several DGSM-based acquisition functions, demonstrating the potential of derivative-aware active learning for sensitivity analysis. However, the authors consider GP partial derivatives as independent random variables. This assumption is unrealistic in the context of a physical model, as the correlation between partial derivatives is significant and must be considered. Furthermore, whereas numerical experiments are conducted over a wide range of test functions, the very limited number of iterations considered in these numerical tests 
does not allow for a proper assessment of the method's behavior. Finally, the proposed acquisition functions  remain mainly local or built as an aggregation of local functions. In other words, when a candidate point is considered, its influence is confined to itself and not to the whole response surface. To address these issues, we first propose acquisition functions that leverage the joint posterior distribution of the full GP gradient, thereby preserving correlations between partial derivatives. We then introduce a global acquisition criterion designed to account for the impact of a new evaluation on the GP response surface over the whole input domain. The resulting methodology is formulated in a multivariate framework that can take into account dependencies between input variables. It also includes one global acquisition function and with two local variants. After reviewing Gaussian process regression in Section \ref{section:1}, we present related work in Section \ref{section:2}. The proposed GSA-targeted active learning methods are introduced in Section \ref{section:nous} and evaluated on classical test functions. Finally, we extend the approach to dependent inputs and apply it to an environmental model assessing the effectiveness of vegetative filter strips for limiting pesticide transfer
and surface-water contamination.

\begin{figure}
    \centering
\includegraphics[width=0.66\linewidth]{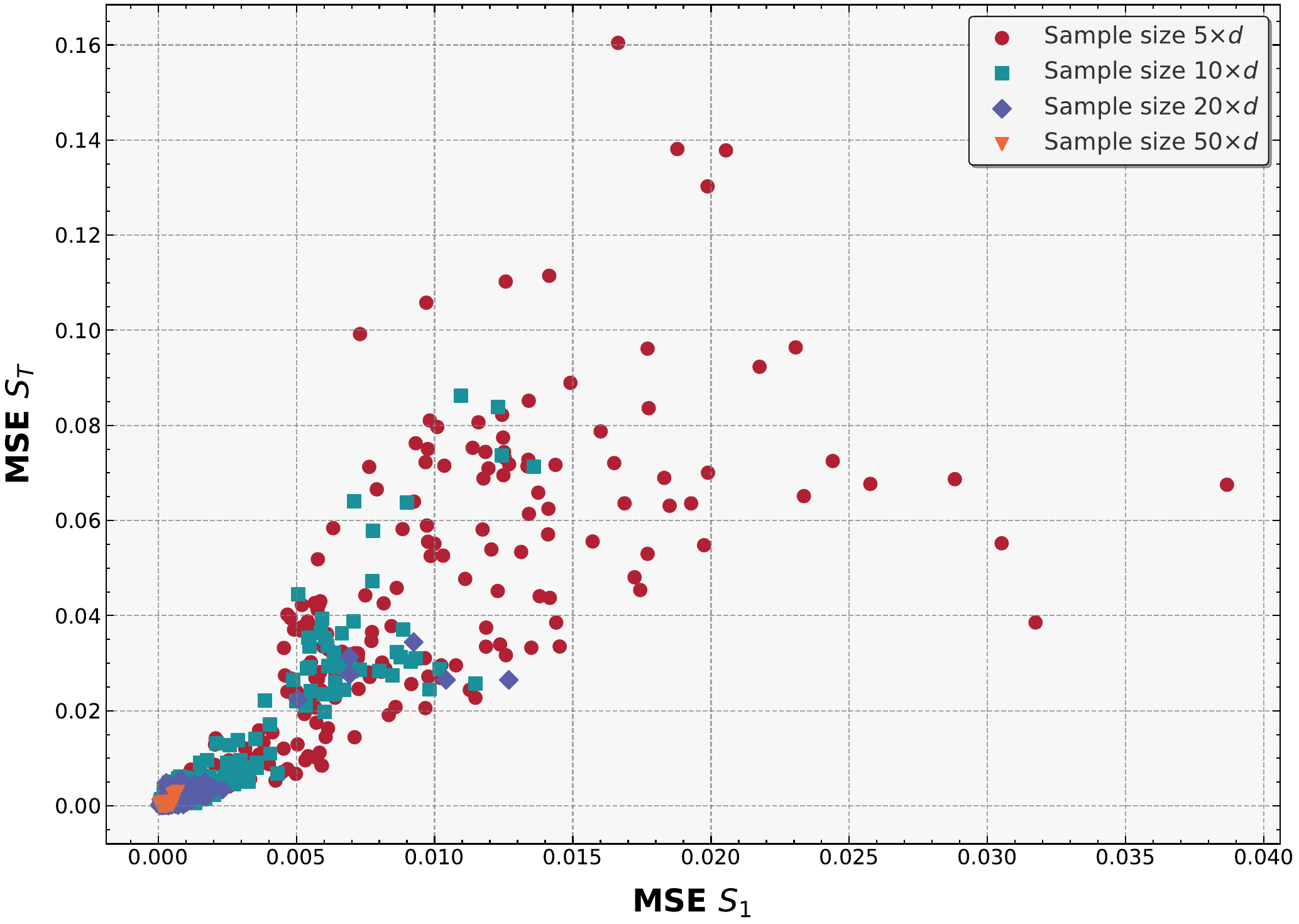}
    \caption{Aggregated MSE of first-order $S_1$ and total-order Sobol' indices $S_T$ for the $d=10$ G-Sobol function, estimated from Gaussian process surrogates trained on different designs of computer experiments. Each point corresponds to a distinct training design, and colors indicate the size of the design (from $5d$ up to $50d$). For each surrogate, Sobol' indices are computed using the same Monte Carlo sample, so differences in MSE solely reflect the impact of the training design. As expected, larger designs generally yield more accurate Sobol' estimates, although some small designs (e.g., of size $5d$) occasionally produce surrogates with comparable accuracy.}
    \label{fig:doe_motiv}
\end{figure}

\section{Background on Gaussian Processes regression}\label{section:1}
Throughout this document, we consider a black-box model $f : \mathcal{X} \to \mathbb{R}$ that is costly to evaluate. Unless otherwise specified, assume that $f$ is defined on a compact subset $\mathcal{X}$ of $\mathbb{R}^d$ and $f\in \mathrm{L^2(\mathcal{X)}}$.
\subsection{Gaussian Processes}\label{sec:GP}
Working with black-box functions can be restrictive when the evaluation cost is high. In the context of sensitivity analysis, estimating Sobol indices is computationally expensive and requires numerous calls to the computational code. We therefore need to limit our reliance on the original code and employ surrogate models to approximate the black-box function. A wide variety of methodologies may be employed in the construction of such metamodels, including, but not limited to, linear models, polynomial chaos expansions, and Gaussian process (GP) regression \cite{GSA_review, PCE_GSA, MARREL2009742}. A GP $\eta$ is characterized by:
\begin{itemize}
    \item A mean function $m : \mathbb{R}^d \to \mathbb{R}$, often assumed constant.
    \item A covariance function (or kernel) $k :\mathbb{R}^d\times \mathbb{R}^d \to \mathbb{R}$ parameterized by $\sigma^2$ (the output scale) and $(\theta_1, \dots, \theta_d)$ (the length scales). Common class of kernels are the RBF kernel and Matérn family. For example, the classical  Matérn $5/2$ kernel has the following expression, 
    \[
k_\text{Mat52}(\mathbf{x}, \mathbf{x}') = \sigma^2 \left(1 + \sqrt{5}r(\mathbf{x},\mathbf{x}') + \frac{5}{3}r(\mathbf{x},\mathbf{x}')^2\right) \exp\left(-\sqrt{5}r(\mathbf{x},\mathbf{x}')\right),
\]  
where $r(\mathbf{x},\mathbf{x}') = \sqrt{\sum_{i=1}^d\frac{(x_i - x_i')^2}{\theta_i^2}}$.
\end{itemize}

Suppose that $f$ is modeled as a realization of a GP $\eta$ with mean function $m$ and covariance function $k$. Let $\boldsymbol{\theta} = (\sigma^2, \theta_1, \dots, \theta_d)$ denote the hyperparameters of $k$, tuned for instance by maximum likelihood or cross-validation from a dataset
\[
\mathcal{D} = \left\{(\mathbf{x}_i, y_i)\right\}_{i=1}^n \subset \mathcal{X} \times \mathbb{R},
\qquad
y_i = f(\mathbf{x}_i).
\]
Set $\mathbf{X} = [\mathbf{x}_1, \dots, \mathbf{x}_n]^\top$,
$\mathbf{y} = [y_1, \dots, y_n]^\top$,
$\mathbf{m} = [m(\mathbf{x}_1), \dots, m(\mathbf{x}_n)]^\top$
and let $\mathbf{K} = \left[k(\mathbf{x}_i, \mathbf{x}_j)\right]_{i,j=1}^n$ be the covariance matrix.
Given a new point $\mathbf{x} \in \mathcal{X}$, the posterior distribution of $\eta(\mathbf{x})\mid\mathcal{D}$ is Gaussian,
\[
\eta(\mathbf{x}) \mid \mathcal{D} \sim \mathcal{N}\left(\mu(\mathbf{x}), \sigma^2(\mathbf{x})\right),
\]
with
\[
\begin{accolade}
\mu(\mathbf{x}) &= m(\mathbf{x}) 
+ \mathbf{k}(\mathbf{x})^\top \mathbf{K}^{-1} \left(\mathbf{y} - \mathbf{m}\right),\\[0.3em]
\sigma^2(\mathbf{x}) &= k(\mathbf{x}, \mathbf{x}) 
- \mathbf{k}(\mathbf{x})^\top \mathbf{K}^{-1} \mathbf{k}(\mathbf{x}),
\end{accolade}
\]
where $\mathbf{k}(\mathbf{x}) = \left[k(\mathbf{x}, \mathbf{x}_1), \dots, k(\mathbf{x}, \mathbf{x}_n)\right]^\top$
is the covariance vector between $\eta(\mathbf{x})$ and $\left(\eta(\mathbf{x}_1), \dots, \eta(\mathbf{x}_n)\right)$.

\begin{remark}
In the whole document, we work with a noise-free model, that is $y_i = f(\mathbf{x}_i)$ for all $i$. The noisy case, with observations
$y_i = f(\mathbf{x}_i) + \varepsilon_i$, $
\varepsilon_i \sim \mathcal{N}\left(0, \sigma_{\text{noise}}^2\right)$
is handled by the standard modification
$\mathbf{K} \quad\longrightarrow\quad \mathbf{K} + \sigma_{\text{noise}}^2 \mathbf{I}_n$
in all posterior formulas. All the methodologies and constructions presented below extend straightforwardly to this noisy setting.
\end{remark}

\subsection{Gradient of a Gaussian process}\label{section:gradientGP}

Let $\mathbf{X}_s = \left(\mathbf{x}^{(s)}_1, \dots, \mathbf{x}^{(s)}_N\right) \in \mathcal{X}^N$ be a set of $N$ points in the input space. We define the stacked gradient vector
\[
\nabla \eta(\mathbf{X}_s)
:= \left(
\nabla \eta\left(\mathbf{x}^{(s)}_1\right)^\top,
\dots,
\nabla \eta\left(\mathbf{x}^{(s)}_N\right)^\top
\right)^\top \in \mathbb{R}^{Nd}.
\]
A key property of GP is that applying a linear operator to a GP yields another GP (with transformed mean and covariance functions), in particular when the operator is a differential one. Indeed, if
\[
m \in \mathcal{C}^1\left(\mathcal{X}; \mathbb{R}\right)
\quad\text{and}\quad
k \in \mathcal{C}^2\left(\mathcal{X} \times \mathcal{X}; \mathbb{R}\right),
\]
then $\eta$ is differentiable almost surely, the gradient process $\nabla \eta$ is again a GP, and its posterior distribution is Gaussian. The joint posterior distribution of the gradient at $\mathbf{X}_s$ is given by
\[
\nabla \eta(\mathbf{X}_s) \mid \mathcal{D}
\sim \mathcal{N}\left(\mu_\nabla(\mathbf{X}_s), \Sigma_\nabla(\mathbf{X}_s)\right),
\]
with
\[
\begin{accolade}
\mu_\nabla(\mathbf{X}_s) &= \nabla m(\mathbf{X}_s)
+ \nabla_x k(\mathbf{X}_s, \mathbf{X}) \mathbf{K}^{-1} \left(\mathbf{y} - \mathbf{m}\right),\\[0.3em]
\Sigma_\nabla(\mathbf{X}_s) &= \nabla_x \nabla_{x'} k(\mathbf{X}_s, \mathbf{X}_s)
- \nabla_x k(\mathbf{X}_s, \mathbf{X}) \mathbf{K}^{-1} \nabla_{x'} k(\mathbf{X}, \mathbf{X}_s).
\end{accolade}
\]

We now detail the notation and structure of the derivative matrices. We define the gradient of the mean function : 
\[
\nabla m(\mathbf{X}_s)
:= \left(
\nabla m\left(\mathbf{x}^{(s)}_1\right)^\top,
\dots,
\nabla m\left(\mathbf{x}^{(s)}_N\right)^\top
\right)^\top \in \mathbb{R}^{Nd}.
\]

\subsubsection*{The matrix $\nabla_x k(\mathbf{X}_s, \mathbf{X}) \in \mathbb{R}^{Nd \times n}$}
By definition, $\nabla_x k(\mathbf{X}_s, \mathbf{X})$ is the matrix of gradients of the kernel with respect to its first argument, evaluated at pairs of gradient locations $\mathbf{x}^{(s)}_\ell$ and training points $\mathbf{x}_i$:
\[
\nabla_x k(\mathbf{X}_s, \mathbf{X})
:=
\begin{bmatrix}
\nabla_x k(\mathbf{x}^{(s)}_1, \mathbf{x}_1) & \cdots & \nabla_x k(\mathbf{x}^{(s)}_1, \mathbf{x}_n)\\
\vdots & & \vdots\\
\nabla_x k(\mathbf{x}^{(s)}_N, \mathbf{x}_1) & \cdots & \nabla_x k(\mathbf{x}^{(s)}_N, \mathbf{x}_n)
\end{bmatrix}.
\]
Each $\nabla_x k(\mathbf{x}^{(s)}_\ell, \mathbf{x}_i)$ is a column vector in $\mathbb{R}^d$, and these vectors are stacked vertically point by point: the block of rows $((\ell-1)d+1):(\ell d)$ corresponds to $\mathbf{x}^{(s)}_\ell$, with coordinates ordered by $j=1,\dots,d$. Indexing by 
$\ell \in \{1,\dots,N\}$ representing a point in the input space, $j \in \{1,\dots,d\}$ representing an input variable,
and using the linear index $r = (\ell - 1)d + j$, the entry in row $r$ and column $i$ of $\nabla_x k(\mathbf{X}_s, \mathbf{X})$ is
\[
\bigl[\nabla_x k(\mathbf{X}_s, \mathbf{X})\bigr]_{(\ell - 1)d + j, i}
= \frac{\partial}{\partial (x^{(s)}_\ell)_j}  k\left(\mathbf{x}^{(s)}_\ell, \mathbf{x}_i\right),
\quad
i = 1,\dots,n.
\]

Equivalently, the $d \times n$ block corresponding to a fixed $\ell$ (rows $(\ell-1)d+1$ to $\ell d$) gathers all derivatives of
\[
k\left(\mathbf{x}^{(s)}_\ell, \mathbf{x}_1\right), \dots, k\left(\mathbf{x}^{(s)}_\ell, \mathbf{x}_n\right)
\]
with respect to the $d$ coordinates of $\mathbf{x}^{(s)}_\ell$.

\subsubsection*{The matrix $\nabla_x \nabla_{x'} k(\mathbf{X}_s, \mathbf{X}_s) \in \mathcal{M}_{Nd}(\mathbb{R})$}
This matrix is block-defined, for $\ell, \ell' \in \{1,\dots,N\}$, the $(\ell,\ell')$ block is a $d \times d$ matrix whose $(j,j')$ entry is the mixed second derivative
\[
\left[\nabla_x \nabla_{x'} k(\mathbf{X}_s, \mathbf{X}_s)\right]_{\ell,\ell'}
=
\left[
\frac{\partial^2}{\partial (x^{(s)}_\ell)_j  \partial (x^{(s)}_{\ell'})_{j'}}
k\left(\mathbf{x}^{(s)}_\ell, \mathbf{x}^{(s)}_{\ell'}\right)
\right]_{j,j' = 1}^d.
\]
For a single point $\mathbf{x} \in \mathcal{X}$ (corresponding to $N = 1$ and $\mathbf{X}_s = (\mathbf{x})$), the gradient is a $d$-dimensional Gaussian vector
\[
\nabla \eta(\mathbf{x}) \mid \mathcal{D}
\sim \mathcal{N}\left(\mu_\nabla(\mathbf{x}), \Sigma_\nabla(\mathbf{x})\right),
\]
with
\[
\begin{accolade}
\mu_\nabla(\mathbf{x}) &= \nabla m(\mathbf{x})
+ \nabla_x k(\mathbf{x}, \mathbf{X}) \mathbf{K}^{-1} \left(\mathbf{y} - \mathbf{m}\right),\\[0.3em]
\Sigma_\nabla(\mathbf{x}) &= \nabla_x \nabla_{x'} k(\mathbf{x}, \mathbf{x})
- \nabla_x k(\mathbf{x}, \mathbf{X}) \mathbf{K}^{-1} \nabla_{x'} k(\mathbf{X}, \mathbf{x}).
\end{accolade}
\]

In particular, for any fixed coordinate $j \in \{1,\dots,d\}$, the partial derivative process
$\mathbf{x} \longmapsto \frac{\partial \eta(\mathbf{x})}{\partial x_j}$
is a GP. Its posterior marginal at $\mathbf{x}$ is
\[
\frac{\partial \eta(\mathbf{x})}{\partial x_j} \,\bigg|\, \mathcal{D}
\sim \mathcal{N}\left(\mu_j'(\mathbf{x}), \Sigma_j'(\mathbf{x})\right),
\]
where
$\mu_j'(\mathbf{x}) := \left[\mu_\nabla(\mathbf{x})\right]_j$ and 
$\Sigma_j'(\mathbf{x}) := \left[\Sigma_\nabla(\mathbf{x})\right]_{jj}$.\\

When the coordinate $j$ is fixed and clear from the context, we simply write $\mu'(\mathbf{x})$ and $\Sigma'(\mathbf{x})$. With ARD Matérn–$5/2$ and RBF kernels, the gradient and Hessian of $k$ admit closed-form expressions. The explicit formulas for Matérn kernels are given in Appendix \ref{appendix:kernel}. A visual representation of GP regression and its gradient posterior distribution is given in Figure \ref{fig:GPgrad}.

\begin{figure}
    \centering
    \includegraphics[width=0.66\linewidth]{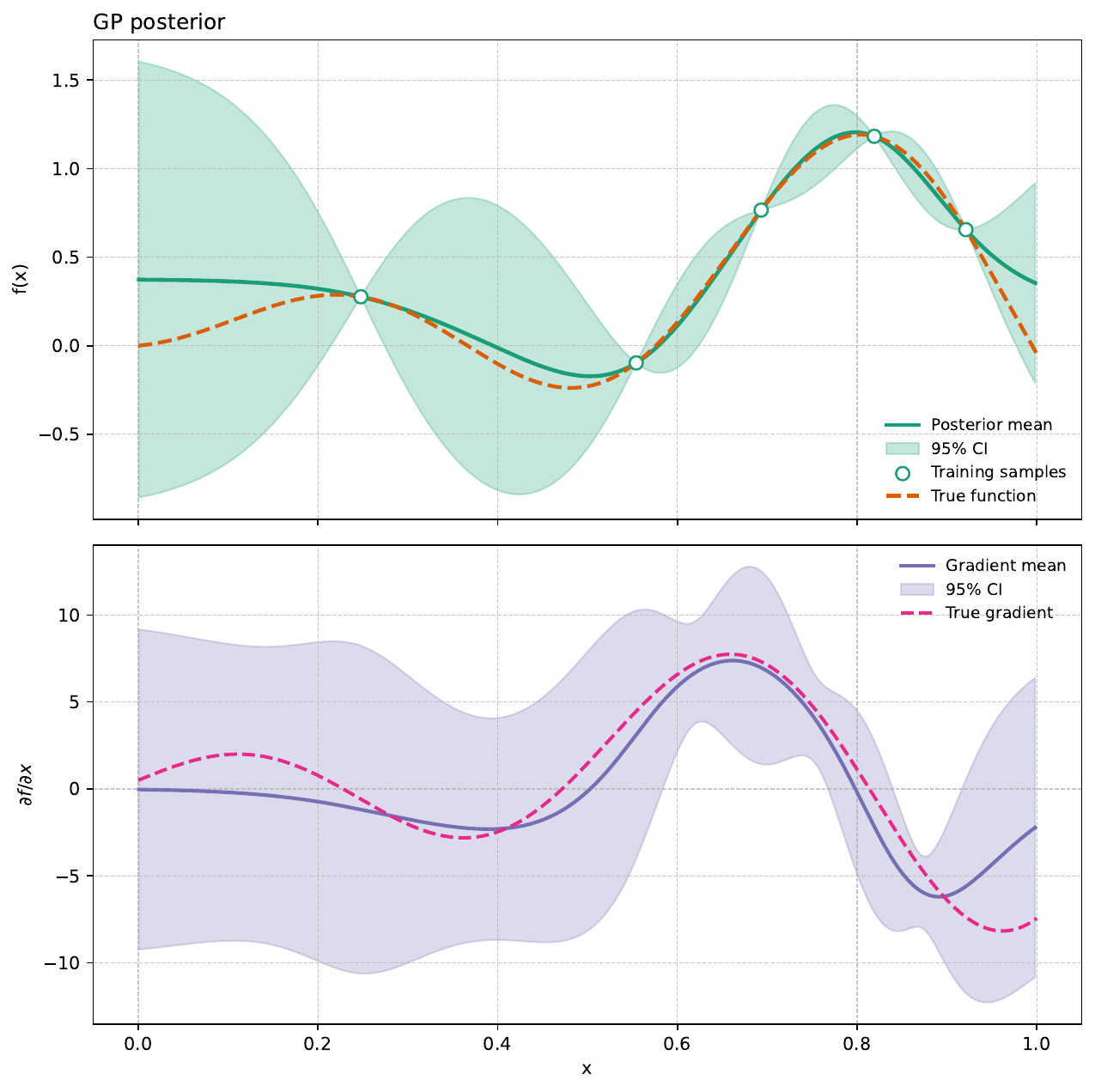}
    \caption{Gaussian process regression on a 1D toy function. (Top) Posterior distribution of $\eta$. (Bottom) Posterior distribution of $\nabla\eta$.}
    \label{fig:GPgrad}
\end{figure}

\section{Derivative-based active learning}\label{section:2}
We work within a numerical modeling framework in which environmental inputs are represented by random variables (or vectors), and our objective is to perform a sensitivity analysis to characterize how uncertainties in the inputs propagate to the model output. A widely used family of sensitivity indices in this context is the Sobol’ variance-based indices \cite{sobol_sensitivity_1993,gsa}. For any subset of input variables $\bm{u} \subset\{1,\dots, d\}$, the first-order Sobol' index is defined as 
\[S_{\bm{u}} =  \frac{\mathrm{Var}\left(\mathbb{E}\left[f(X)\mid X_{\bm{u}}\right]\right)}{\mathrm{Var}\left(f(X)\right)}\]
and quantifies the part of the output variance caused only by the variables in $\bm{u}$. The total-order Sobol' index is given by
\[S_{\bm{u}}^\top = 1 - S_{{}_{\sim}\bm{u}}\]
where ${}_{\sim}\bm{u}$ denotes the complement of $\bm{u}$. This index measures the overall contribution of the group $\bm{u}$, including all the interaction effects with the other inputs. In the particular case of a single variable $k$, we write $S_{k}$ and $S_k^\top$ as shorthand for $S_{\bm{u}}$ and $S_{\bm{u}}^\top$ with $ \bm{u} =\{k\}$.\\
Although Sobol' indices provide an interpretable variance decomposition, their estimation may require numerous model evaluations, especially for high-dimensional models. For this reason, we consider an alternative class : the Derivative-based Global Sensitivity Measures (DGSM), introduced by \cite{DGSM}. In contrast to variance-based methods, DGSMs rely on the derivatives of the model with respect to the inputs.
\subsection{Derivative-based Global sensitivity Measure}\label{sec: derivative}
From now on, to facilitate subsequent developments, we assume that $f \in H^1(\mathcal{X})$, where the Sobolev space $H^1(\mathcal{X})$ is defined by,
\[
H^1(\mathcal{X})
= \left\{ g \in L^2(\mathcal{X}) \mid \nabla g \in L^2(\mathcal{X}) \right\}.
\]
This assumption is compatible with the physics-based models considered in this work. Let $(\Omega,\mathcal{A}, \mathbb{P})$ a probability space and  $X : (\Omega, \mathcal{A, \mathbb{P)}} \to \mathbb{R^d}$ be a random vector with probability measure $\mu_X$ supported on $\mathcal{X}$. For each marginal $X_k$ of $X$, we note $\mu_k$ its probability measure.
The $k$-th DGSM is defined by
\[
D_k
= \mathbb{E}\left[ \left( \partial_k f(X) \right)^2 \right]
= \int_{\mathcal{X}} \left( \partial_k f(x) \right)^2  \mu_X(\mathrm{d}x).
\]
Such indices are commonly used as screening indices \cite{basicstrends} but cannot be used to rank inputs. However, the interesting property stated below allows to link DGSMs to variance-based indices. Suppose that the inputs variables $X_1, \dots, X_d$ are independents, and $\mu_k$ satisfies a (1-dimensional) Poincaré inequality :
\[\int g(x)^2\mu_k(\mathrm{d}x) \leq C(\mu_k)\int g'(x)\mu_k(\mathrm{d}x)\]
for all \textit{sufficiently} regular centered functions $g$. The best constant $C(\mu_k)$ achieving this inequality is noted $C_k$. Suppose also that $\mu_k$ is a Boltzmann measure, i.e.,
\[
  \mu_k(\mathrm{d}x)
  = Z_k^{-1}\mathrm{e}^{-V_k(x)}\mathrm{d}x,
\]
for some potential $V_k:\mathbb{R}\to\mathbb{R}$ with $V_k\in C^2$ and $\mathrm{e}^{-V_k}\in L^1(\mathbb{R})$. Then, 
\begin{align}\label{DGSMsobol}
    S_k^\top \leq \frac{C_k D_k}{\mathrm{Var}(f(X))}
\end{align}
where
\[C_k= 4\left[\sup_{x\in\mathbb{R}}\frac{\min(F_k(x), 1-F_k(x))}{\rho_k(x)}\right]^2\]
with, $F_k$ (resp. $\rho_k$) the c.d.f (resp. p.d.f) associated to $X_k$ \cite{lamboni}. Note that any log-concave probability measure is a Boltzmann measure. Then, the constant $C_k$ is known in closed-form for popular probability distribution \cite{iooss_dgsm}. The inequality in \ref{DGSMsobol} is an equality if $f$ is linear. One can note that the DGSMs are a particular case of the Active Subspaces (AS) \cite{constantine}. Indeed, the DGSMs are the diagonal elements of the AS matrix. \\

The strong link between derivative-based and variance-based sensitivity analysis with respect to the Equation \ref{DGSMsobol} motivates the use of DGSM to enrich the metamodel. Additionally, the GP benefits easy access to the posterior distribution of the partial derivatives. Hence, it is simple to build acquisition functions upon the squared derivatives of the surrogate. Furthermore, estimating indices is easy in the context of GP thanks to developments by \cite{Lozzo}, where estimation is performed either by a plug-in of the GP mean or by using the GP paths.

\subsection{Single-loop active learning strategy}
The classical Bayesian active learning strategy is described in Algorithm \ref{alg:ALalg}. The idea is, for a total budget $T$ to sequentially add points given a certain acquisition function $\alpha : \mathcal{X} \to\mathbb{R}$ that is encoding the most informative regions given our objective. Hence, it is crucial to design a relevant function based on the characteristics of the model.

\begin{algorithm}
\caption{Active Learning loop}
\label{alg:ALalg}
\begin{algorithmic}[1]
    \State \textbf{Input:} Initial DoE $\mathcal{D}_0 = \{(\mathbf{x}_i, y_i)_{i=1,\dots, N}\}$, a surrogate $\mathcal{GP}_0$ and a total budget $T$.
    \For{$t = 1,\dots, T$}
        \State Train a GP using $\mathcal{D}_{t-1}$
        \State Find the candidate $\mathbf{x}_*$ such that:
        \Statex \[
        \mathbf{x}_* \in \argmax_{\mathbf{x} \in \mathcal{X} \subset \mathbb{R}^d}
        \alpha\left(\mathbf{x}, \mathcal{GP}(\mathcal{D}_{t-1})\right)
        \]
        \State Compute $y_* = f(\mathbf{x}_*)$
        \State $\mathcal{D}_{t} = \mathcal{D}_{t-1}\cup\{(\mathbf{x}_*, y_*)\}$
    \EndFor
\end{algorithmic}
\end{algorithm}
\subsection{Derivative-based acquisition functions}\label{sec:eux}
Recent developments from \cite{Belakaria} introduce several acquisition functions that target the squared individual partial derivatives, whose distribution is analytically tractable in the GP framework. Specifically, the posterior distribution of $\partial_i \eta(\mathbf{x})\mid \mathcal{D}$ follows a Gaussian distribution, and it is well-established that when $W \sim \mathcal{N}(\mu, \sigma^2)$, then $W^2 \sim \sigma^2\chi^2_1\left(\frac{\mu^2}{\sigma^2}\right)$. Denoting $\sigma'_{j,\text{sq}}(\mathbf{x})^2$ as the variance of $\left(\frac{\partial \eta(\mathbf{x})}{\partial x_j}\right)^2\mid \mathcal{D}$, we have, for a non-central $\chi^2$ distribution :
\begin{equation}\label{eq:vareux}
\sigma'_{j,\text{sq}}(\mathbf{x})^2 = 2\left(\Sigma'_j(\mathbf{x})^4 + 2\mu'_j(\mathbf{x})^2\Sigma_j'(\mathbf{x})^2\right).
\end{equation}

The proposed acquisition functions identify candidate points $\mathbf{x}$ where either the prediction variance or the variance reduction is maximized:
\[\alpha_{\text{PartialMaxVar}}(\mathbf{x}) = \sum_{j=1}^d\sigma'_{j,sq}(\mathbf{x})^2\quad\text{and}\quad \alpha_{\text{PartialRedVar}}(\mathbf{x}) = \sum_{j=1}^d \sigma'_{j,sq}(\mathbf{x})^2 - \mathbb{E}_y\left[\sigma'_{j,sq,\ell}(\mathbf{x})^2\right]\]
where $\sigma'_{j,\text{sq},\ell}(\mathbf{x})^2$ represents the one-step look-ahead variance. The look-ahead distribution, also called fantasy, is obtained by sampling the posterior distribution $N_f$ times at a candidate point $\mathbf{x}$, yielding a collection ${(\mathbf{x}, y_i)}_{i=1\ldots N_f}$. Subsequently, $\eta$ is conditioned independently to obtain a collection of GPs $(\eta(\mathbf{x}) \mid  \mathcal{D}\cup\{(\mathbf{x}, y_i)\})_{i=1\ldots N_f}$. The concept of fantasy modeling is reminded in Section \ref{sec:fanty}. The one-step look-ahead distribution of the partial derivatives of the GP can then be computed, with the sole dependence on observations manifesting through the look-ahead mean. \\

Despite their intuitive appeal and practical utility, these acquisition functions exhibit several limitations. First, following the methodology described by \cite{Belakaria}, the look-ahead mean $\mu_{j,\ell}'$ is approximated by $\mu_j'$, yielding a look-ahead variance for squared derivatives that is independent of $y$. Second, the correlation between derivatives, which is inherent to the GP construction, is only partially captured in this formulation. Most importantly, these criteria remain local: they focus on local variance effects and do not account for the impact of new evaluations on the global variance of the surrogate model. In the following section, we introduce an alternative construction that explicitly addresses each of these three limitations.

\section{Gradient-based active learning}\label{section:nous}
In this section, we propose a new acquisition strategy specifically designed to overcome the three limitations discussed above. First, we clarify the look-ahead aspect by using fantasy GP, which offer a consistent approach to handling dependence on future observations. Next, we introduce two new classes of acquisition function. The first extends the previously defined local criteria by explicitly incorporating gradient information and the joint distribution of derivatives, thereby better capturing their correlation structure. The second class is a global variant designed to account for the impact of new evaluations on the overall uncertainty of the surrogate model, rather than only local variance. Finally, we present a block-based approach to facilitate the practical computation of these acquisition functions when the input dimension is large.
\subsection{Vector-valued active learning}
\subsubsection*{Look-ahead approximation : GP fantasy}\label{sec:fanty}

Many variance-reduction acquisition functions rely on a look-ahead criterion, in which the effect of a potential new evaluation at $\mathbf{x} \in \mathcal{X}$ is assessed through an expectation of the form
\[
\mathbb{E}_{y}\left[\operatorname{Var}\left(Z \mid \mathcal{D} \cup \{(\mathbf{x}, y)\}\right)\right],
\]
where $Z$ denotes some quantity of interest derived from the GP (e.g., a derivative, an integral, or a functional of the surrogate). Such criteria are typical of the Stepwise Uncertainty Reduction (SUR) paradigm \cite{bect_sequential_2012}. This look-ahead variance explicitly depends on the yet-unobserved value $y = f(\mathbf{x})$, which is not available in practice. A natural way to handle this dependence is to interpret $y$ as a random variable distributed according to the current posterior of the GP at $\mathbf{x}$, and to approximate the above expectation by averaging over plausible future observations.

This is precisely the role of fantasy GP. For a candidate point $\mathbf{x}$, we draw hypothetical observations $y^{(1)}, \dots, y^{(N_f)}$ from the predictive distribution of $\eta(\mathbf{x}) \mid \mathcal{D}$ and, for each draw $y^{(m)}$, consider the corresponding fantasy dataset $\mathcal{D} \cup \{(\mathbf{x}, y^{(m)})\}$. Each fantasy dataset defines a fantasy GP, with its own posterior mean and covariance. By computing the variance of $Z$ under these fantasy GPs and averaging over $m = 1,\dots,N_f$, we obtain a Monte Carlo approximation of the look-ahead criterion above, without ever observing $y$ in reality. The related idea of conditioning on fictitious “fantasy” observations to implement look-ahead and batch acquisition functions appears in early parallel Bayesian optimization work \cite{Snoek2012PracticalBO, Ginsbourger2008,Ginsbourger2010} and is now widely known under the name of “fantasy models” in modern GP toolboxes (e.g., BoTorch / GPyTorch \cite{gpytorch, botorch}). An illustrative example of the fantasy machinery is given in Figure \ref{fig:GPfant}. In the following, we exploit this construction to obtain tractable and flexible look-ahead acquisition functions.
\begin{figure}
    \centering
    \includegraphics[width=0.66\textwidth]{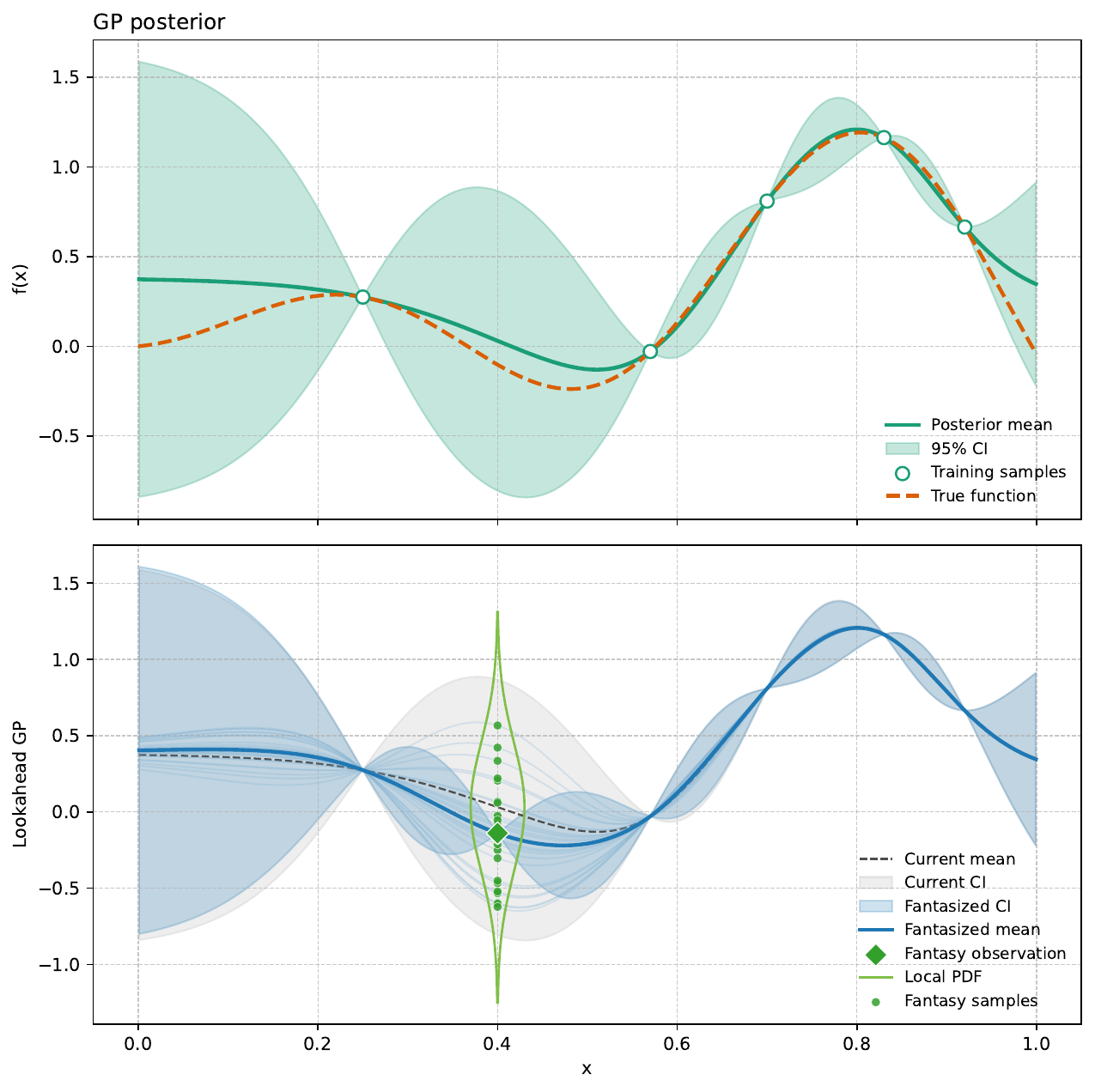}
    \caption{Gaussian process posterior and look-ahead visualization. Current posterior (grey) and a set of fantasy posteriors obtained by conditioning on hypothetical evaluations at the candidate locations $X_{\mathrm{cand}}$. 
    All fantasy trajectories are shown as thin lines, while one randomly selected fantasy path is highlighted in bold and the corresponding fantasy observation at $X_{\mathrm{cand}}$ is indicated by the marker.
    }
    \label{fig:GPfant}
\end{figure}
\subsubsection*{Gradient-based local criteria} 
To overcome the limitations of previous acquisition functions, we propose several local criteria and a global variance reduction criterion. In other words, the objective is to identify the candidate point $\mathbf{x}$ that allows the global variance to be reduced as much as possible, rather than optimizing only a local reduction. The methods in Section \ref{sec:eux} implicitly assume that:
\[\forall (j,j') \in \Iintv{1,d}^2,\quad\mathrm{Cov}\left(\left(\frac{\partial \eta(\mathbf{x})}{\partial x_j}\right)^2, \left(\frac{\partial \eta(\mathbf{x})}{\partial x_{j'}}\right)^2\right)  = 0\]
This is not realistic. Consider a candidate $\mathbf{x}\in\mathbb{R}^d$. We build the posterior distribution of the GP's gradient $Z_{\mathbf{x}}:= \nabla\eta(\mathbf{x})\mid\mathcal{D}$ known in closed form from Section \ref{section:gradientGP}. From this formulation, we can derive some local version of Maximum Variance \texttt{GradMaxVar} and Variance Reduction \texttt{GradRedVar} with 
\[\alpha_\text{GradMaxVar}(\mathbf{x}) = \mathrm{Var}\left(Z_{\mathbf{x}}^\top Z_{\mathbf{x}}\right).
\]
and,
\[\alpha_\text{GradVarRed}(\mathbf{x}) = \mathrm{Var}\left(Z_{\mathbf{x}}^\top Z_{\mathbf{x}}\right) - \mathbb{E}_y\Big[\mathrm{Var}\Big(Z_{\mathbf{x}}^\top Z_{\mathbf{x}} \mid \mathcal{D}\cup\{(\mathbf{x}, y_i)\}_{i=1,\dots,N_f}\Big)\Big]\]
We work directly with the gradient squared norm. This means that we consider the variance of the sum of the DGSMs, thus preserving all dependencies between partial derivatives. However, while taking into account the correlation structure between derivatives this criterion remains local, it measures an influence pointwise.

\subsubsection*{A global gradient-based variance reduction criterion}  
Let \(\mathbf{X_s} = (\mathbf{x}_1^*, \dots, \mathbf{x}_N^*) \in \mathcal{X}^N\) a set of random points drawn from the input distribution. We build the posterior distribution of the GP's gradient $Z_{\bm{X}_s} := \nabla \eta(\bm{X}_s) \mid \mathcal{D}\sim \mathcal{N}(\mu_\nabla, \Sigma_\nabla)$ using Section \ref{section:gradientGP}. Hence, we aim to identify the point $ \mathbf{x}$ which maximizes the variance reduction of $\Vert Z_{\bm{X}_s} \Vert_2^2 = Z_{\bm{X}_s}^\top Z_{\bm{X}_s}$. While doing so, the overall covariance structure is taken into account, i.e., covariances between partial derivatives and covariances between points in $\mathbf{X_s}$. The \texttt{GlobalGradVarRed} acquisition function is defined as: 
\[
\alpha(\mathbf{x}) = \mathrm{Var}(Z_{\bm{X}_s}^\top Z_{\bm{X}_s}) - \mathbb{E}_y\Big[\mathrm{Var}\Big(Z_{\bm{X}_s}^\top Z_{\bm{X}_s} \mid \{\mathcal{D} \cup (\mathbf{x}, y_i)\}_{i=1,\dots,N_f}\Big)\Big].
\]
As $Z^\top Z$ is a quadratic form of a Gaussian vector its variance is given by :
\begin{equation}
\label{eq:varglobal}
\mathrm{Var}(Z_{\bm{X}_s}^\top Z_{\bm{X}_s}) = 2 \mathrm{Tr}(\Sigma_\nabla^2) + 4 \mu_\nabla^\top \Sigma_\nabla\mu_\nabla
\end{equation}
As well as being global in the sense of being related to the gradient, \texttt{GlobalGradVarRed} is global in another way. Indeed, for a given candidate point $\mathbf{x}$, we examine its impact on the entire GP response surface using one-step-ahead conditioning and evaluation on a set of points $\mathbf{X_s}$, which provides a more accurate indication of the point's impact on the entire model. Moreover, this acquisition function can deal with both independent and dependent vector of inputs. The only difficulty lies in the optimization step in order to obtain candidates that are realistic and belong to the support of the input's joint distribution.

\subsection{Chunked evaluation of the global acquisition function}
As the dimension $d$ increases and the set of points $\mathbf{X}_s$ becomes finer,
the memory burden in the acquisition loop grows significantly. In practice,
$N \approx 50d$, so the covariance matrices
$\Sigma_\nabla \in \mathcal{S}^{++}_{Nd}(\mathbb{R})$ are large and costly to handle.
In addition, the look-ahead distribution is approximated using $N_f$ GP fantasies,
which introduces an extra factor in computational cost. Therefore, it is essential to
work with an approximate version of the global posterior distribution of the gradient.\\

The main idea is to approximate $\Sigma_\nabla$ by a block-diagonal matrix, where each
block corresponds to the covariance matrix evaluated over a batch of points from
$\mathbf{X}_s$. This is motivated by the fact that, for standard stationary kernels,
nearby points exhibit strong correlations while distant points are only weakly
correlated. To do so, we use a chunked evaluation of
\texttt{GlobalGradVarRed} based on KMeans clustering. Similar ideas appear
in sparse and local GP methods such as Vecchia and block-Vecchia approximations \cite{Vecchia,blockVecchia} and adapted here. In contrast, the developed approximation is specifically tailored to preserving the global quadratic
functional $\| \nabla \eta(\mathbf{X}_s)\|_2^2$ rather than the full marginal
likelihood.\\

Let denote $C$ the number of clusters.
We partition $\mathbf{X}_s$ into clusters
$\{C_i\}_{i=1}^{C}$ using KMeans on the input space and denote $n_i = \mathrm{Card}(C_i)$.
Collect the gradient components associated with a cluster $i$ into a vector
$Z^{(i)} \in \mathbb{R}^{n_i d}$, so that (up to a permutation)
$Z_{\mathbf{X}_s} = (Z^{(1)},\dots,Z^{(C)})$.
The exact covariance matrix then has a block structure
$\Sigma_\nabla = [\Sigma_\nabla^{(ij)}]_{1\le i,j\le C}$,
where $\Sigma_\nabla^{(ii)} \in \mathbb{R}^{n_i d \times n_i d}$ collects all
within-cluster covariances and $\Sigma_\nabla^{(ij)}$ the cross-cluster covariances.
We approximate $\Sigma_\nabla$ by the block-diagonal matrix,
\[
  \widetilde{\Sigma}_\nabla
  :=
  \mathrm{diag}\bigl(\Sigma_\nabla^{(11)},\dots,\Sigma_\nabla^{(CC)}\bigr),
\]
i.e., we retain all within-cluster covariances and set the cross-cluster blocks to zero.
The approximate mean is
$\widetilde{\mu}_\nabla = (\mu_\nabla^{(1)},\dots,\mu_\nabla^{(C)})$, the concatenation
of the per-cluster means. A visual representation of the chunked approximation is given in Figure \ref{fig:chunkcov}. For readability, entries are re-ordered so that gradient components associated with the same cluster appear contiguously, making the retained within-cluster blocks visible.

\begin{figure}
    \centering
    \includegraphics[width=\linewidth]{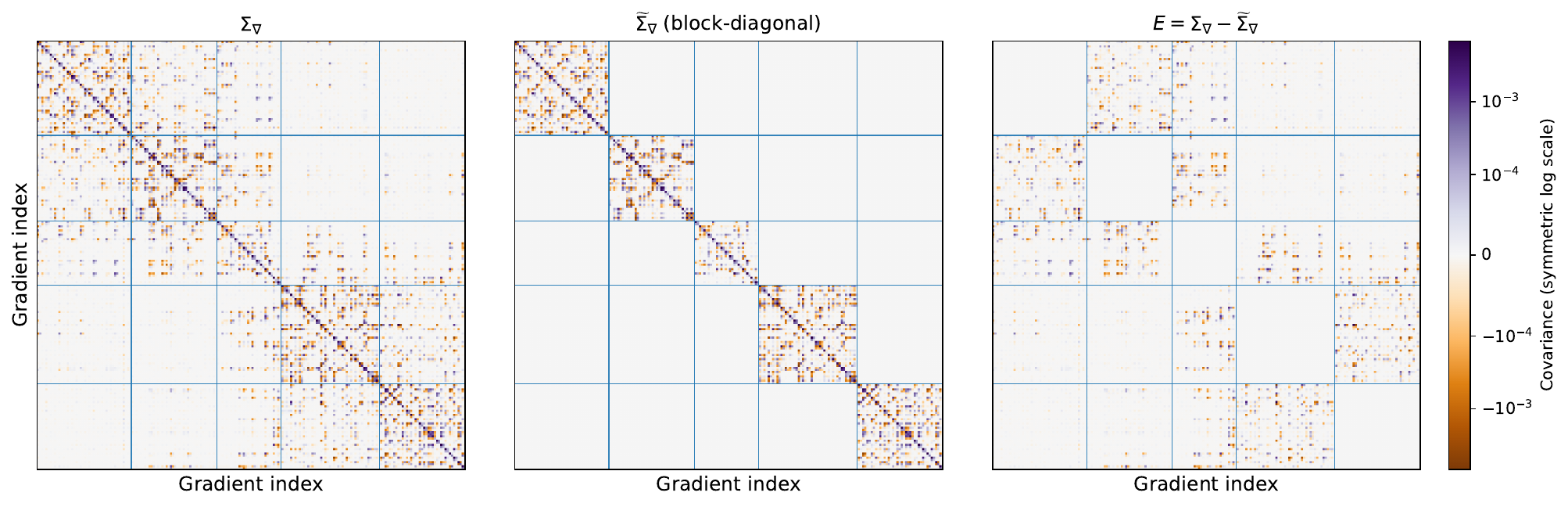}
\caption{Heatmaps of the joint posterior covariance of the GP gradient at the sampled locations $\mathbf{X}_s$: the exact covariance $\Sigma_\nabla$, its block-diagonal approximation $\widetilde{\Sigma}_\nabla$ obtained by retaining only within-cluster blocks (classical KMeans on $\mathbf X_s$), and the residual $E=\Sigma_\nabla-\widetilde{\Sigma}_\nabla$. The horizontal and vertical axes index the same flattened gradient vector $Z_{\mathbf X_s}=[\nabla\eta(x_1^*)^\top,\ldots,\nabla\eta(x_N^*)^\top]^\top\in\mathbb R^{Nd}$, so each pixel corresponds to the covariance between two partial derivatives (point and coordinate). The color scale is a symmetric logarithmic normalization.}

    \label{fig:chunkcov}
\end{figure}

Using the block-diagonal structure of $\widetilde{\Sigma}_\nabla$ in
\eqref{eq:varglobal}, the variance of the quadratic form is approximated by
\[
  \widetilde{\Var}\left(Z_{\mathbf{X}_s}^\top Z_{\mathbf{X}_s}\right)
  =
  2\Tr\bigl(\widetilde{\Sigma}_\nabla^2\bigr)
  + 4\widetilde{\mu}_\nabla^\top \widetilde{\Sigma}_\nabla \widetilde{\mu}_\nabla
  =
  \sum_{i=1}^C
  \left(
    2\Tr\bigl((\Sigma_\nabla^{(ii)})^2\bigr)
    + 4\mu_\nabla^{(i)\top} \Sigma_\nabla^{(ii)} \mu_\nabla^{(i)}
  \right).
\]
This chunked variance, which uses the full covariance structure within each cluster but
ignores cross-cluster interactions, defines the approximated acquisition function
\texttt{GlobalGradVarRedKmeans}. The chunk sizes $n_1, ..., n_C$ controls the trade-off between
accuracy and computational / memory cost: $n_1=N$ corresponds to the exact variance,
while small $n_i$'s lead to stronger sparsification.

\begin{proposition}
  \label{prop:chunked}
  Let $E = \Sigma_\nabla - \widetilde{\Sigma}_\nabla$ be the error matrix. At fixed
  clustering, the absolute error between the exact and chunked variances satisfies
  \[
    \bigl| V - \widetilde{V}\bigr|
    \le
    2 \|E\|_F^2
    + 4\|\mu_\nabla\|_2^2 \|E\|_2,
  \]
  where $V = \Var(Z_{\mathbf{X}_s}^\top Z_{\mathbf{X}_s})$,
  $\widetilde{V} = \widetilde{\Var}(Z_{\mathbf{X}_s}^\top Z_{\mathbf{X}_s})$,
  $\|\cdot\|_F$ is the Frobenius norm and $\|\cdot\|_2$ the spectral norm.
  Moreover, for the Matérn-$5/2$ kernel with output variance $\sigma^2$ and
  length-scales $\theta_1,\dots,\theta_d$, define
  \[
    A = \frac{5\sigma^2}{3}, \quad
    \theta_{\text{min}} = \min_{1\le i\le d} \theta_i, \quad
    L^2 = \sum_{i=1}^d \theta_i^{-4}.
  \]
  For $i\neq j$, let
  \[
    \Delta_{ij} = \min_{x\in C_i,x'\in C_j} r(\mathbf{x},\mathbf{x}'),
    \qquad
    \Delta = \min_{i\neq j} \Delta_{ij},
  \]
  Set
  \[
    h(r)
    =
    2A^2 \exp(-2\sqrt{5}r)
    \biggl(
      25\frac{r^4}{\theta_{\text{min}}^4}
      + (1+\sqrt{5}r)^2 L^2
    \biggr),\quad r\ge0.
  \]
  Then, for clusters of sizes $n_1,\dots,n_C$ with $N = \sum_{i=1}^C n_i$, we have
  \begin{align}
    \label{eq:bound_E_F_main}
    \|E\|_F^2
    &\le
    \Bigl(
      N^2 - \sum_{i=1}^C n_i^2
    \Bigr) h(\Delta),\\[0.4em]
    \label{eq:bound_E_2_main}
    \|E\|_2
    &\le
    B\sqrt{h(\Delta)},
  \end{align}
  where
  \[
    B =
    \begin{cases}
      N\bigl(1-\tfrac{1}{C}\bigr),
      &\text{for balanced clusters } (n_i = n_j),\\[0.3em]
      \dfrac{N + (C-2)n_{\max}}{2},
      &\text{for imbalanced clusters, with } n_{\max} = \max_i n_i.
    \end{cases}
  \]
\end{proposition}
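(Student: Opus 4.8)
The plan is to split the statement into an exact algebraic identity for the variance gap and then an analytic control of the error matrix $E$ in the two norms. First I would work in the fixed global ordering of the flattened gradient vector, in which $\widetilde{\Sigma}_\nabla$ is precisely the block-diagonal (within-cluster) part of $\Sigma_\nabla$ and $\widetilde{\mu}_\nabla = \mu_\nabla$, so that $E$ carries only the cross-cluster blocks and has vanishing diagonal blocks. Writing $\Sigma_\nabla = \widetilde{\Sigma}_\nabla + E$ and inserting this into \eqref{eq:varglobal}, the quartic term expands as $\Tr(\Sigma_\nabla^2) - \Tr(\widetilde{\Sigma}_\nabla^2) = 2\Tr(\widetilde{\Sigma}_\nabla E) + \Tr(E^2)$. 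The key observation is that $\Tr(\widetilde{\Sigma}_\nabla E) = 0$: each diagonal block of the product $\widetilde{\Sigma}_\nabla E$ is a within-cluster block of $\widetilde{\Sigma}_\nabla$ times a diagonal block of $E$, and the latter is zero. Since $E$ is symmetric, $\Tr(E^2) = \|E\|_F^2$, while the quadratic term collapses to $\mu_\nabla^\top \Sigma_\nabla \mu_\nabla - \widetilde{\mu}_\nabla^\top \widetilde{\Sigma}_\nabla \widetilde{\mu}_\nabla = \mu_\nabla^\top E \mu_\nabla$. This gives the exact identity $V - \widetilde{V} = 2\|E\|_F^2 + 4\mu_\nabla^\top E \mu_\nabla$; the triangle inequality together with $|\mu_\nabla^\top E \mu_\nabla| \le \|\mu_\nabla\|_2^2\,\|E\|_2$ then yields the first bound.

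Next I would reduce the kernel-specific estimates to a single point-pair block. For a pair $(\ell,\ell')$ lying in different clusters, the corresponding $d\times d$ block of $E$ is $\nabla_x\nabla_{x'}k(\mathbf{x}^{(s)}_\ell,\mathbf{x}^{(s)}_{\ell'})$ minus its posterior correction (see below). Differentiating the Matérn-$5/2$ kernel twice via the closed forms of Appendix \ref{appendix:kernel} and separating the isotropic ($\delta_{jj'}$-type) contribution from the rank-one outer-product-in-scaled-differences contribution, I would bound each scaled difference to obtain $\|\nabla_x\nabla_{x'}k(\mathbf{x}^{(s)}_\ell,\mathbf{x}^{(s)}_{\ell'})\|_F^2 \le h(r)$ with $r = r(\mathbf{x}^{(s)}_\ell,\mathbf{x}^{(s)}_{\ell'})$; the $25\,r^4/\theta_{\text{min}}^4$ and the $(1+\sqrt5 r)^2 L^2$ groups of $h$ arise exactly from these two families of terms. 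Using that $h$ is non-increasing on $[\Delta,\infty)$, every cross-cluster pair then satisfies $h(r)\le h(\Delta_{ij})\le h(\Delta)$.

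The two matrix norms then follow by aggregation. For \eqref{eq:bound_E_F_main}, writing $E_{\ell\ell'}$ for the point-pair block, $\|E\|_F^2 = \sum_{(\ell,\ell')\,\text{cross}} \|E_{\ell\ell'}\|_F^2 \le (N^2 - \sum_i n_i^2)\,h(\Delta)$, counting the ordered cross-cluster point pairs. For \eqref{eq:bound_E_2_main} I would use the block Gershgorin (row-sum) bound $\|E\|_2 \le \max_i \sum_{j\ne i}\|\Sigma_\nabla^{(ij)}\|_2$, control each cluster cross-block by its Frobenius norm $\|\Sigma_\nabla^{(ij)}\|_2 \le \|\Sigma_\nabla^{(ij)}\|_F \le \sqrt{n_i n_j\,h(\Delta)}$ (it gathers $n_i n_j$ point-pair blocks, each bounded by $h(\Delta)$), and conclude: for balanced clusters the row sum equals $(C-1)(N/C) = N(1-\tfrac1C)$, while in general the arithmetic–geometric inequality $\sqrt{n_i n_j}\le (n_i+n_j)/2$ gives $\max_i\sum_{j\ne i}\tfrac{n_i+n_j}{2} = \tfrac{N+(C-2)n_{\max}}{2}$, i.e. the constant $B$.

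The main obstacle is the passage from the prior to the posterior in the second step: $h(r)$ naturally bounds the prior mixed-derivative covariance, whereas $E$ is built from the posterior cross-blocks $\Sigma_\nabla^{(ij)} = [\nabla_x\nabla_{x'}k]^{(ij)} - [\nabla_x k\,\mathbf{K}^{-1}\nabla_{x'}k]^{(ij)}$. The positive semidefinite correction controls the diagonal blocks through the Loewner order $\Sigma_\nabla \preceq \nabla_x\nabla_{x'}k$, but it does not dominate off-diagonal blocks entrywise, so the cross-block estimate must either absorb the correction term explicitly or be read as a bound on the prior covariance whose decay governs the chunking error. A secondary, purely technical point is to verify the monotonicity of $h$ on the relevant range $[\Delta,\infty)$; the remaining steps are routine.
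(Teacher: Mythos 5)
Your proposal reproduces the paper's argument essentially step for step: the same decomposition $\Sigma_\nabla = \widetilde{\Sigma}_\nabla + E$ with the observation that $\Tr(\widetilde{\Sigma}_\nabla E)=0$ because $E$ has vanishing diagonal blocks, the same identity $\Tr(E^2)=\|E\|_F^2$ and Cauchy--Schwarz/spectral bound on $\mu_\nabla^\top E\mu_\nabla$, the same per-pair Frobenius bound $\|H(x,x')\|_F^2\le h(r)$ on the mixed second derivatives of the Mat\'ern-$5/2$ kernel split into the rank-one and $\delta_{jj'}$ contributions, the same count of $N^2-\sum_i n_i^2$ ordered cross-cluster pairs, and the same block row-sum bound on $\|E\|_2$ followed by $\|E_{ij}\|_2\le\|E_{ij}\|_F\le\sqrt{n_in_j\,h(\Delta)}$ and AM--GM. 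Your exact identity $V-\widetilde V = 2\|E\|_F^2+4\mu_\nabla^\top E\mu_\nabla$ is a slight sharpening of the paper's triangle-inequality presentation, and your remark that $h$ need only be checked non-increasing is handled in the paper by a dedicated lemma; neither changes the route.

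The ``main obstacle'' you flag --- that $h(r)$ bounds the \emph{prior} cross-covariance $\nabla_x\nabla_{x'}k$, whereas $E$ is built from the \emph{posterior} blocks $\Sigma_\nabla^{(ij)} = [\nabla_x\nabla_{x'}k]^{(ij)} - [\nabla_x k\,\mathbf{K}^{-1}\nabla_{x'}k]^{(ij)}$ --- is a genuine issue, and you should know that the paper's own proof does not resolve it: its Proposition on the norm bounds writes $\|E_{ij}\|_F^2=\sum_{x\in C_i}\sum_{x'\in C_j}\|H(x,x')\|_F^2$ with $H$ defined as the prior mixed Hessian, i.e., it tacitly identifies the posterior cross-block with the prior one. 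As you note, the Loewner inequality $\Sigma_\nabla\preceq\nabla_x\nabla_{x'}k$ controls diagonal blocks but not off-diagonal ones, and the correction term's cross-cluster blocks need not decay with $\Delta$ (two distant points that are both near training data can have a non-negligible correction cross-block). So the bounds \eqref{eq:bound_E_F_main}--\eqref{eq:bound_E_2_main} are rigorous for the prior gradient covariance but require an additional argument --- or an extra additive term controlling $\|[\nabla_x k\,\mathbf{K}^{-1}\nabla_{x'}k]^{(ij)}\|$ --- to cover the posterior case actually used in the acquisition function. Your identification of this gap is the one substantive point on which your analysis goes beyond the paper.
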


The proof is given in Appendix \ref{appendix:proof}. Note that for imbalanced
clusters the term $\sum_{i=1}^C n_i^2$ is smaller, and the factor
$N^2 - \sum_i n_i^2$ in \eqref{eq:bound_E_F_main} is larger, so balanced clustering (achieved via constrained KMeans \cite{kmeansconstrained, kmeansconstrained_software}) reduces the bound. If the points $\mathbf{X}_s$ are generated from a Sobol sequence in $[0,1]^d$, the minimum pairwise distance typically scales as $N^{-1/d}$  \cite{niederreiter1992random, SOBOL200780}. This suggests that, in a worst-case sense, one can expect $\Delta$ to be of order $N^{-1/d}$. In practice, we compute $\Delta$ directly
from the clustered dataset.

\paragraph{Rule of thumb for the chunk sizes.}
From Proposition \ref{prop:chunked}, at fixed clustering,  the bound depends on the chunk sizes $n_i$ and the inter-cluster separation $\Delta$: increasing the number of clusters $C$ (hence reducing $n_i$) generally tightens the bound, at the price of higher computational overhead. If the exact variance $V$ is available, we select $C$ by balancing the memory budget and a target relative tolerance on $\vert V-\widetilde V\vert/V$. When $V$ cannot be computed, we choose the largest chunk sizes compatible with the memory constraint, compute $\Delta$ from the resulting clustering, and adjust $C$ accordingly. In practice, since $\Delta$ varies with the clustering itself, the error is not a deterministic function of $C$ alone and this procedure should be viewed as a heuristic guideline rather than an automatic rule.

\section{Numerical experiments}\label{section:4}
The existing acquisition methods presented in Section \ref{sec:eux}, as well as the new methods  introduced and described in Section \ref{section:nous}, are tested using several test cases and a real-life model of vegetative filter strip management for pesticide filtration on agricultural land. The experimental context is as follows:
\begin{itemize}
    \item The initial design  consists of $5d$ points generated by a Sobol sequence.
    \item A total budget of $10d$ points is allocated according to the acquisition methods. Thus, the final DoE comprises $15d$ points.
    \item Each method is repeated independently 30 times.
    \item To assess the performance of the metamodels, we track at each iteration $t$ of active learning the RMSE over input dimensions for both DGSMs and total Sobol' indices. Denoting by $D_k$  the reference DGSM and  $S_{k}^T$ the total Sobol' index of input $k$, and by $\widehat{D}_k^{(t)}$ and $\widehat{S}_{k}^{T,(t)}$ their GP-based estimates at iteration $t$, we define
\[
\mathrm{RMSE}_{\mathrm{DGSM}}^{(t)}
=
\left(
\frac{1}{d} \sum_{k=1}^d
\bigl(\widehat{D}_k^{(t)} - D_k\bigr)^2
\right)^{1/2},
\qquad
\mathrm{RMSE}_{\mathrm{Sobol}}^{(t)}
=
\left(
\frac{1}{d} \sum_{k=1}^d
\bigl(\widehat{S}_{k}^{T,(t)} - S_{k}^T\bigr)^2
\right)^{1/2}.
\]
We will also consider $Q^2$, which expresses the proportion of variance explained by the metamodel on a test set that was not used for training : 
    \[Q^2 = 1 - \frac{\sum_{i=1}^{n_{\text{test}}}(y_i - \hat{y}_i)^2}{\sum_{i=1}^{n_{\text{test}}}(y_i - \bar{y})^2}\]
where $X_\text{test} = (\mathbf{x}_1, \dots, \mathbf{x}_{n_\text{test}})\in\mathbb{R}^{n_{\text{test}}d}$ with $y_i$ the ground truth target values, $\hat{y}_i$ the predicted values and $\bar{y}$ is the mean of the true target.
\end{itemize}
These metrics allow us to assess the performance of active learning methods in terms of both the quantities of interest and the sensitivity indices, as well as the metamodel itself.

 \subsection{Application on test functions}
The acquisition functions are first tested on classical test functions widely used in GSA. In particular, we consider the Ishigami function ($d=3$) and the G–Sobol function ($d=6,15$), for which all Sobol' indices are known analytically, as well as the Hartmann function ($d=4$), often used in optimization benchmarking and for which the indices must be estimated numerically. Finally, we include a fixed path of a Gaussian process ($d=2$), which allows us to verify the behavior of the methods when the GP hypothesis is exactly satisfied. A reminder of the analytical expressions of these functions, together with the distributions of their inputs, is provided in Appendix \ref{appendix:A}. The numerical experiments are grouped into two figures: one for a selection of classical test cases (Figure \ref{fig:toyfunc}) and one for higher-dimensional ones (Figure \ref{fig:othertoyfunctions}). The results for DGSM estimation show that the most accurate estimates are consistently obtained with global gradient-based acquisition functions. Local gradient-based criteria also outperform purely derivative-based approaches, which is expected given that the acquisition functions are specifically designed to exploit gradient information. For the total Sobol' indices estimation from the metamodel, the proposed methods achieve accuracy comparable to, and in several cases better than, competing strategies. Nevertheless, some variability across replications is observed, with randomized enrichment occasionally yielding the best performance. This variability decreases as the input dimension increases, which can be explained by the fact that the considered functions remain relatively easy to approximate with a sampling budget of order $15d$ using Sobol' sequences. In this regime, the subsequent pick-freeze estimation of Sobol' indices is already reliable, and differences between acquisition strategies naturally diminish.\\

A complementary visualization is provided in Figure \ref{fig:Toy2D}, illustrating the spatial distribution of enrichment points for a two-dimensional toy function. Sobol’ sampling leads to a uniform coverage of the input domain. In contrast, the \texttt{GradMaxVar} strategy exhibits very poor exploration: points tend to accumulate on the boundary of the domain and are often repeatedly selected, since the criterion does not reduce gradient uncertainty once a point has been sampled. The \texttt{GradVarRed} strategy provides a more balanced behavior, exploring the domain more effectively and frequently adding points in pairs or small clusters, reflecting the benefit of neighboring evaluations for gradient estimation. Finally, the \texttt{GlobalGradVarRed} strategy concentrates points in regions where gradients are most informative while preserving a good level of global exploration, achieving the best compromise between exploration and exploitation among the presented method.\\

On the Ishigami function, additional diagnostic plots (Figures \ref{fig:parallelPlotIshi}–\ref{fig:Q2partialIshi}) further highlight the differences between strategies. The parallel coordinate plot (Figure \ref{fig:parallelPlotIshi}) shows that \texttt{GlobalGradVarRed} naturally aligns its sampling with the dominant direction (the second input), whereas Sobol’ points remain more diffuse. The histograms of partial derivatives (Figure \ref{fig:HistPartialIshi}) confirm that \texttt{GlobalGradVarRed} explores regions where $\frac{\partial f}{\partial x_2}$ is large while allocating fewer points to irrelevant regions (e.g., for $\frac{\partial f}{\partial x_3}$). Finally, the evolution of the $Q^2$ coefficient for predicting partial derivatives (Figure \ref{fig:Q2partialIshi}) shows substantially faster improvement for \texttt{GlobalGradVarRed}, especially on the influential derivative $\frac{\partial f}{\partial x_2}$, while both methods perform modestly on less influential derivatives, a behavior consistent with their sensitivity indices. These observations demonstrate that the gradient-based global criterion effectively identifies and prioritizes directions of high importance.\\

Finally, although the acquisition functions focus only on squared gradients and partial derivatives, the overall surrogate accuracy—as measured by the coefficient of determination—is comparable to, and sometimes better than, state-of-the-art alternatives for a limited evaluation budget. Taken together, these experiments indicate that incorporating gradient information (without gradient observations of the \textit{true} model) within a global criterion yields improved performance over existing methods.

\begin{figure}
    \centering
    \includegraphics[width=\linewidth]{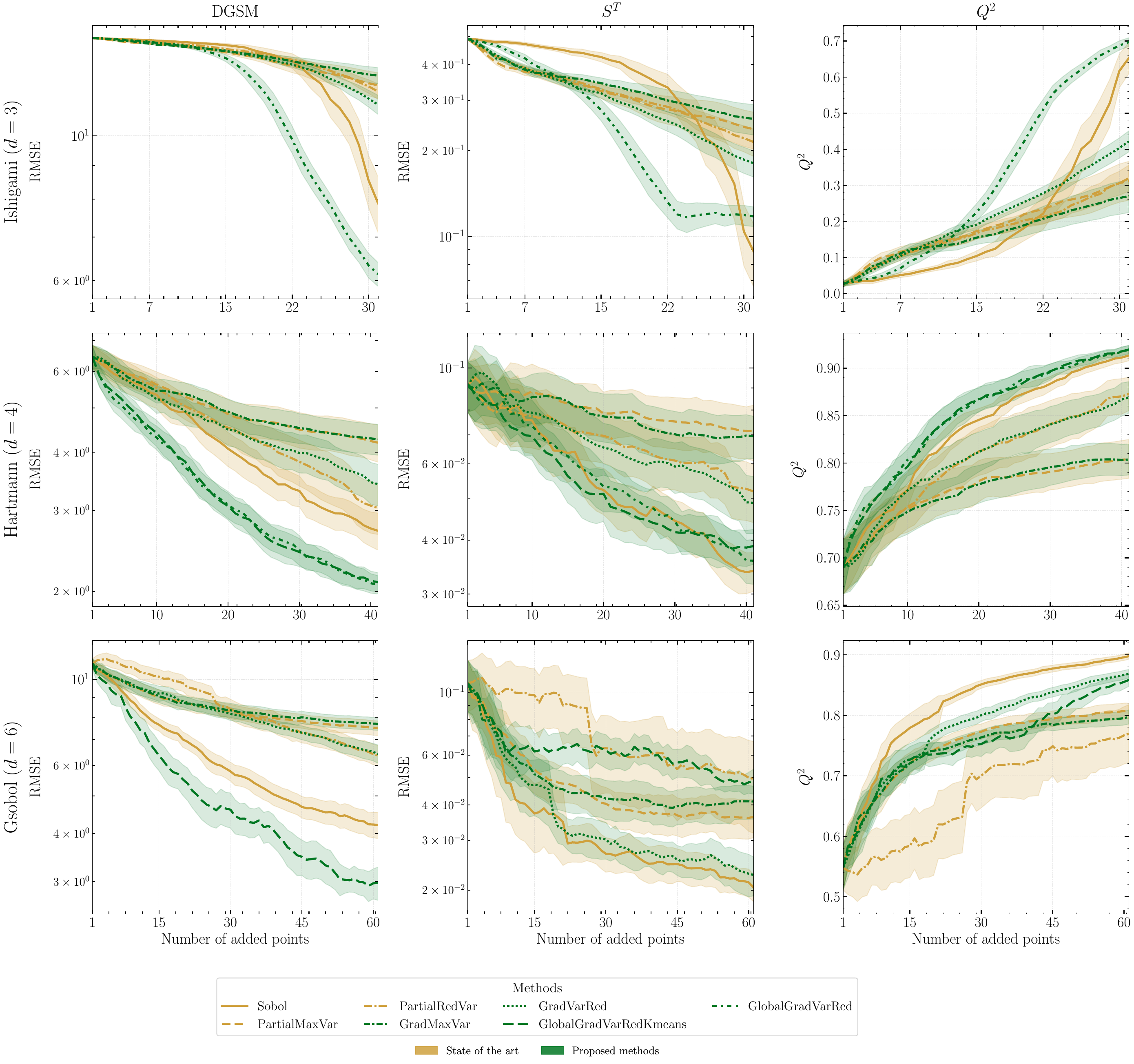}
    \caption{Active learning strategies on a selection of classical test functions.}
    \label{fig:toyfunc}
\end{figure}
\begin{figure}[ht]
    \centering
    \includegraphics[width=0.8\linewidth]{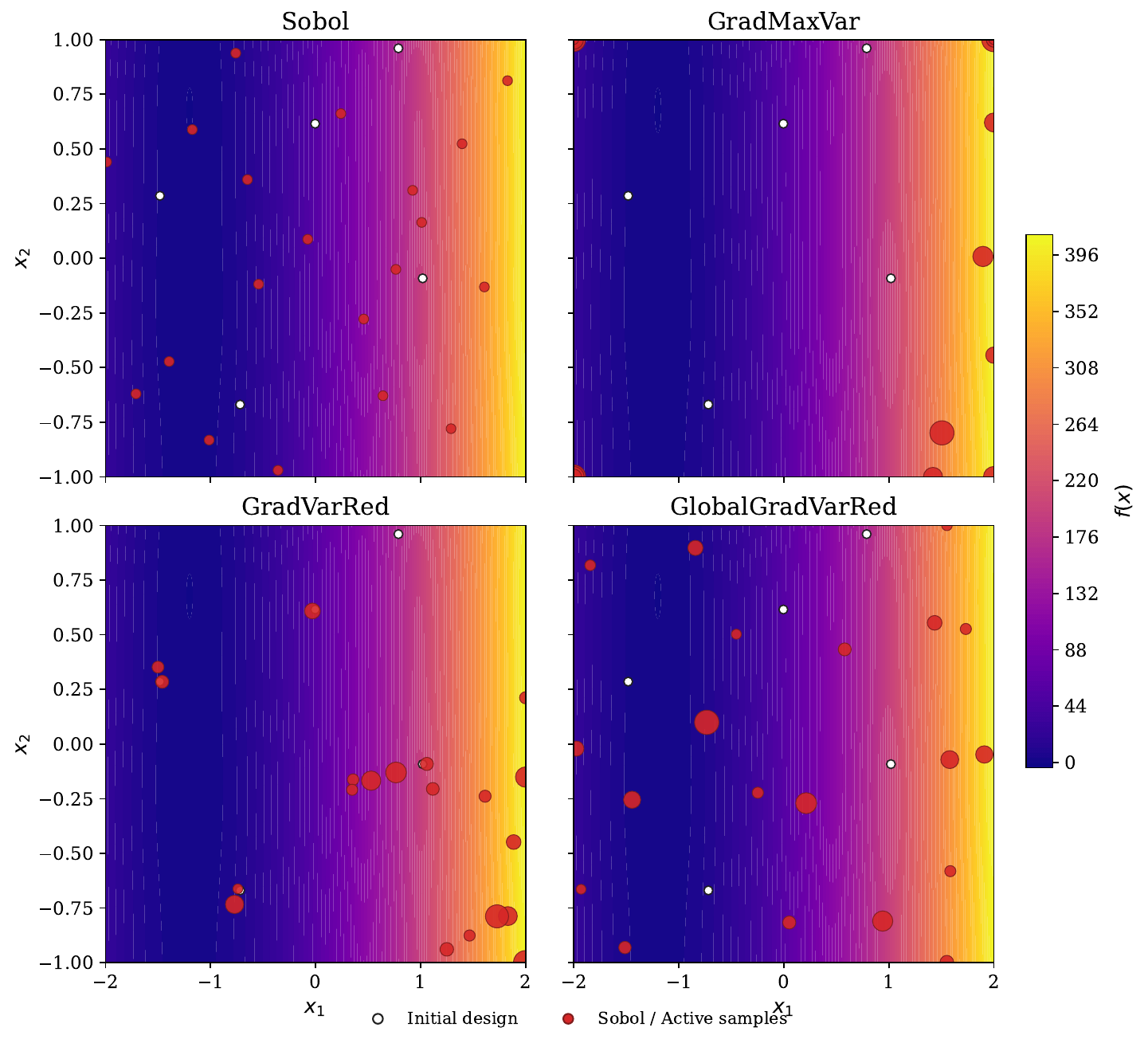}
    \caption{Spatial distribution of enrichment points on a two-dimensional toy function for the \texttt{LocalGradVarRed} (top right), \texttt{LocalGradMaxVar} (bottom left), \texttt{GlobalGradVarRed} (bottom right) acquisition functions  and for Sobol' random sampling (top left).}
    \label{fig:Toy2D}
\end{figure}
\subsection{Application on BUVARD-MES}
\subsubsection*{Presentation of the model}
BUVARD-MES is a decision-support tool for the design of vegetative filter strips (VFSs) aimed at reducing runoff-driven transfers of sediments and pesticides in agricultural catchments (Figure \ref{fig:fig10}). The tool and its modeling foundations are described in detail in \cite{Carluer2017,veillon2022buvardmes,QLHS}. In this work, we consider the Morcille catchment (Figure \ref{fig:fig11}) digital twin as a realistic test bed for our active-learning methodology: it provides spatially distributed simulations of field-to-VFS transfers over a vineyard watershed in the Beaujolais region (France), supported by long-term monitoring data \cite{Morcille_datapaper,QLHS}.
\begin{figure}[htbp]
    \centering
    \includegraphics[width=0.8\textwidth]{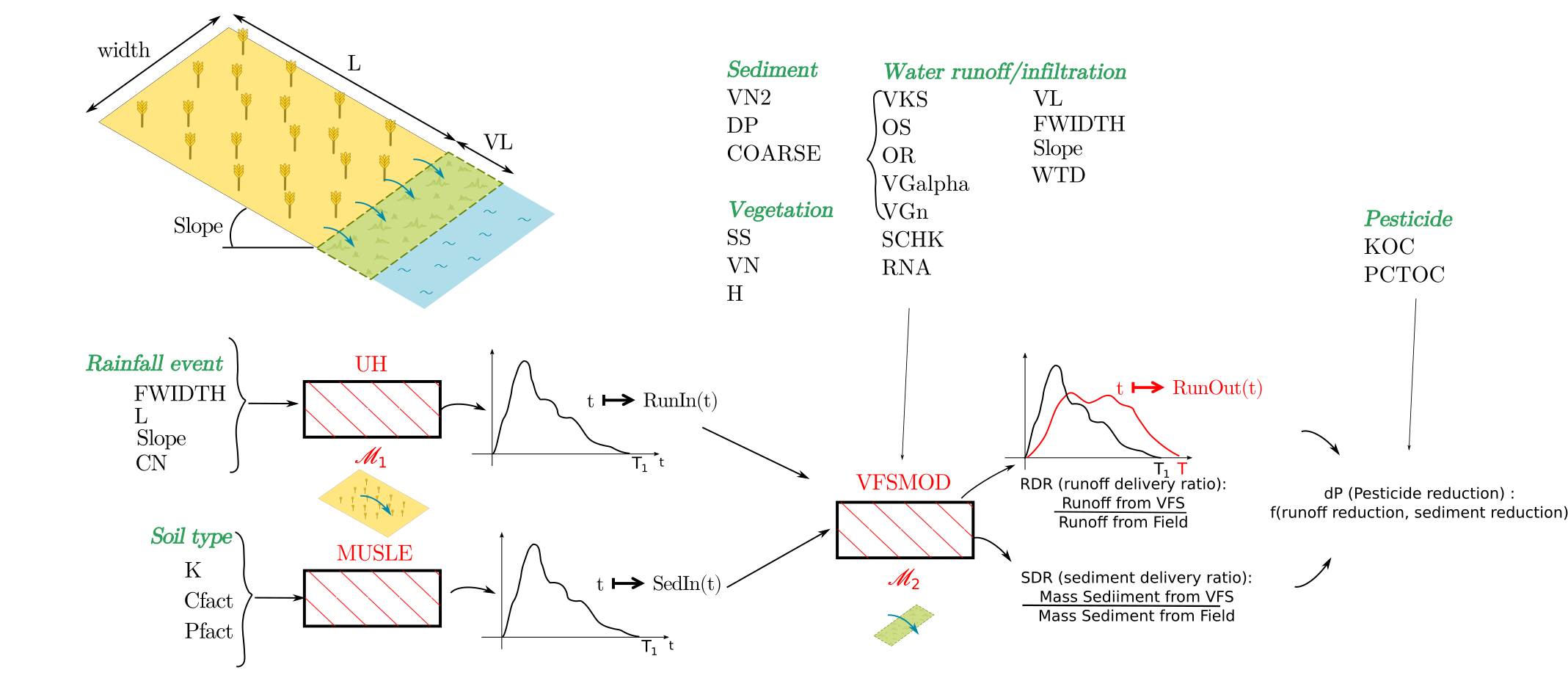}
    \caption{BUVARD\_MES model and its sub-models, with inputs for climate, soil, vegetation properties of the fields and VFSs. The group of (Van Genuchten) dependent parameters is indicated by a brace.}
    \label{fig:fig10}
\end{figure}
\begin{figure}[htbp]
    \centering
    \includegraphics[width=0.8\textwidth]{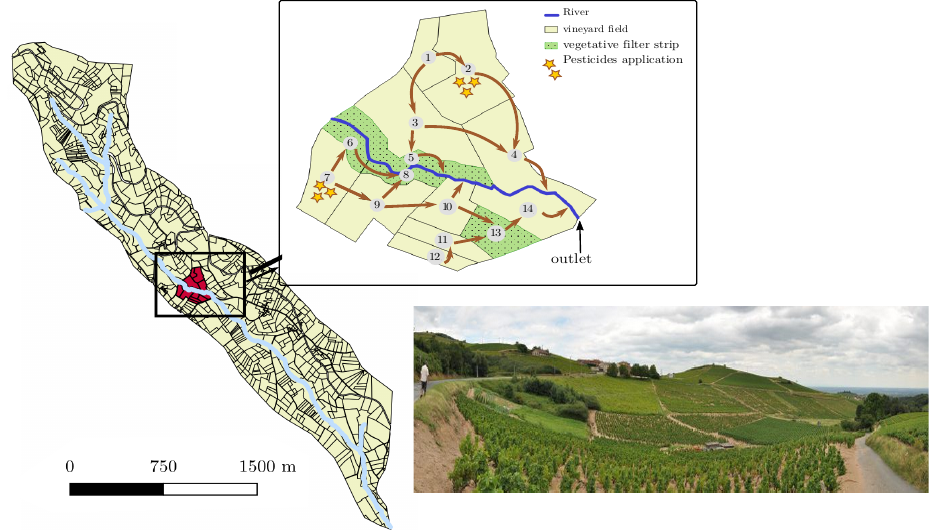}
    \caption{The Morcille catchment (left and bottom) and its digital twin, in the Beaujolais vineyard region (France). Example of surface properties extracted from the virtual catchment (top).}
    \label{fig:fig11}
\end{figure}
\subsubsection*{Adapted acquisition function for constrained input space}
The input space of BUVARD-MES is composed of several variables characterizing the crop field, the vegetative filter strip, and the pesticides. These variables are described in Table \ref{tab:variables}. In this work, we consider a simplified version of BUVARD-MES, involving a reduced set of input variables, in order to focus on the proposed active learning methodology.
\begin{table}[h!]
\centering
\begin{tabular}{l l l}
\toprule
\textbf{Variable} & \textbf{Description} & \textbf{Distribution} \\
\midrule
$\text{Area\_CA}$ & Contributive Area & KDE\\
$\text{Width\_VFS}$ & Width of Vegetative Filter Strip & KDE \\
$\text{WTD\_VFS}$ & Water-table depth & $\mathcal{U}(0.4, 2)$ \\
$\text{VL\_VFS}$ & Length of Vegetative Filter Strip & $\mathcal{U}(5, 30)$ \\
$\text{CN}$ & Curve number & $\mathcal{U}(69,89)$ \\
$\alpha_{VG}$ & Inverse of the air entry suction 
    & \multirow{2}{*}{\centering Mixture $(\alpha_{VG}, n_{VG})$} \\
$n_{VG}$ & Measure of the pore-size distribution & \\
\bottomrule
\end{tabular}
\caption{Description of the input variables of the simplified BUVARD-MES model.}
\label{tab:variables}
\end{table}
Some variables are represented using kernel density estimators fitted to field observations. Moreover, the pair $(\alpha_{VG}, n_{VG})$ has a complex joint distribution driven by the underlying soil physics: not all combinations of these parameters are physically admissible, and many non-realistic scenarios are ruled out. As a consequence, the joint distribution exhibits extended regions of zero density, and $(\alpha_{VG}, n_{VG})$ effectively lives on a constrained support (further structured by the soil type, which induces dependence between the two variables). As a result, the methodology developed above cannot be applied directly to this group of inputs. In particular, the acquisition function is optimized with a multi-start L-BFGS procedure over a hypercube domain, which may propose candidate points in regions where the joint density of $(\alpha_{VG}, n_{VG})$ is zero, i.e., outside the empirical support. Optimizing the acquisition function over such a hypercube is therefore inappropriate: it ignores the zero-probability regions of the admissible input space and can lead to infeasible enrichment points. To address this issue, we build on classical ideas from constrained Bayesian optimization \cite{constBO} and smooth support approximation. We adapt the acquisition function by introducing a differentiable penalization term that down-weights the acquisition value outside the joint support of $(\alpha_{VG}, n_{VG})$. The resulting penalized \texttt{GlobalGradVarRed} acquisition function is
\[
\alpha(\mathbf{x}) = \rho(\mathbf{x})\alpha_{\mathrm{GlobalGradVarRed}}(\mathbf{x}),
\]
where $\rho : \mathbb{R}^d \to [0,1]$ is a smooth penalty. Let $\mathbf{x} = (x_1,\dots,x_d)\in\mathbb{R}^d$ and let $
\mathbf{z} = (x_{i_1},\dots,x_{i_s})^\top\in\mathbb{R}^s$
denote a group of $s$ dependent variables such that $\mathbf{x} = (\mathbf{z, {}_{\sim}\mathbf{z})}$. We approximate the joint distribution of $\mathbf{z}$ by a $k$-component Gaussian mixture model with means $\bm{\mu}_i \in \mathbb{R}^s$ and covariances $\Sigma_i \in \mathcal{S}^{++}_{s}(\mathbb{R})$, $i = 1,\dots, k$.
For each component, we recall the Mahalanobis distance
\[
d_i(\mathbf{z}) = 
\sqrt{(\mathbf{z}-\bm{\mu}_i)^\top \Sigma_i^{-1} (\mathbf{z}-\bm{\mu}_i)},
\quad i=1,\dots,k,
\]
and choose a radius $R_i>0$. Then, the empirical support of the dependent group is approximated by the union of ellipsoids
\[
\mathcal{S} = \bigcup_{i=1}^k \left\{ \mathbf{z} \in \mathbb{R}^s \mid d_i(\mathbf{z}) \leq R_i \right\}
\]
Such unions of overlapping  ellipsoids are a parametric analog of the robust support approximations used in geometric inference, where compact sets are reconstructed via unions of balls or ellipsoids \cite{claire_brecheteau}. To construct a smooth surrogate of the indicator of $\mathcal{S}$, we rely on a standard smooth relaxation based on log-sum-exp and sigmoid functions. First, we define the signed distances $
s_i(\mathbf{z}) = d_i(\mathbf{z}) - R_i
$ and compute the soft minimum using the log-sum-exp operator \cite{boyd2004convex}
\[
S(\mathbf{z}) 
= 
\log \left(
\sum_{i=1}^k 
\exp\bigl(- s_i(\mathbf{z})\bigr)
\right)
\]
Finally, the smooth penalty is defined as,
\[
\rho(\mathbf{x})
= 
\sigma\left(a(b - S(\mathbf{z}))\right),
\]
where $\sigma : \mathbb{R} \to [0,1]$ is the sigmoid function, $a>0$ controls the transition sharpness, and $b\geq 0$ is a dilation factor of the feasible set. This parametrization provides flexibility in controlling rejection regions while preserving differentiability for L-BFGS. \\

Under the same numerical settings as in Section \ref{section:4}, we applied our acquisition function to the BUVARD-MES model. Due to the increased computational cost of this model, we report results over 10 independent repetitions (instead of 30 in Section \ref{section:4}). In Figure \ref{fig:plot_VG}, we can analyze where the points are added during the active learning loop for the Van Genuchten group $(\alpha_{VG}, n_{VG})$. Also, we see the penalty boundary that set the acquisition function to zero if outside the feasible region. Figure \ref{fig:resBuvard} shows the results of estimating the DGSM and total Sobol indices, as well as $Q^2$, highlighting the effectiveness of the acquisition method for building a good metamodel in a low-budget context, and demonstrating the quality of the sensitivity indices estimates. 
\begin{figure}
    \centering
    \includegraphics[width=0.6\linewidth]{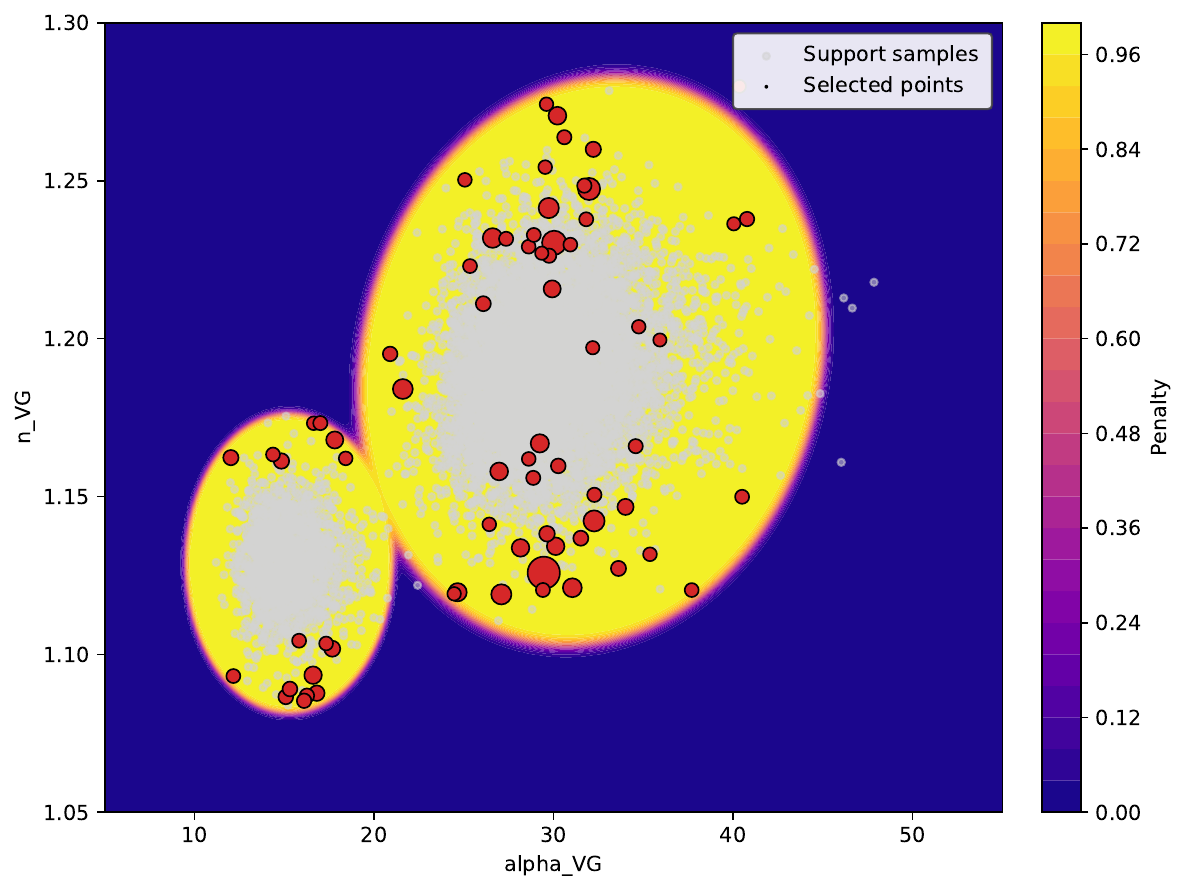}
    \caption{Localization of points of the penalized \texttt{GlobalGradVarRed} for the Van Genuchten group $(\alpha_{VG}, n_{VG})$ and penalty region for one active learning run. The width of the red points is proportional to the acquisition function value.}
    \label{fig:plot_VG}
\end{figure}
\begin{figure}
    \centering
    \includegraphics[width=\linewidth]{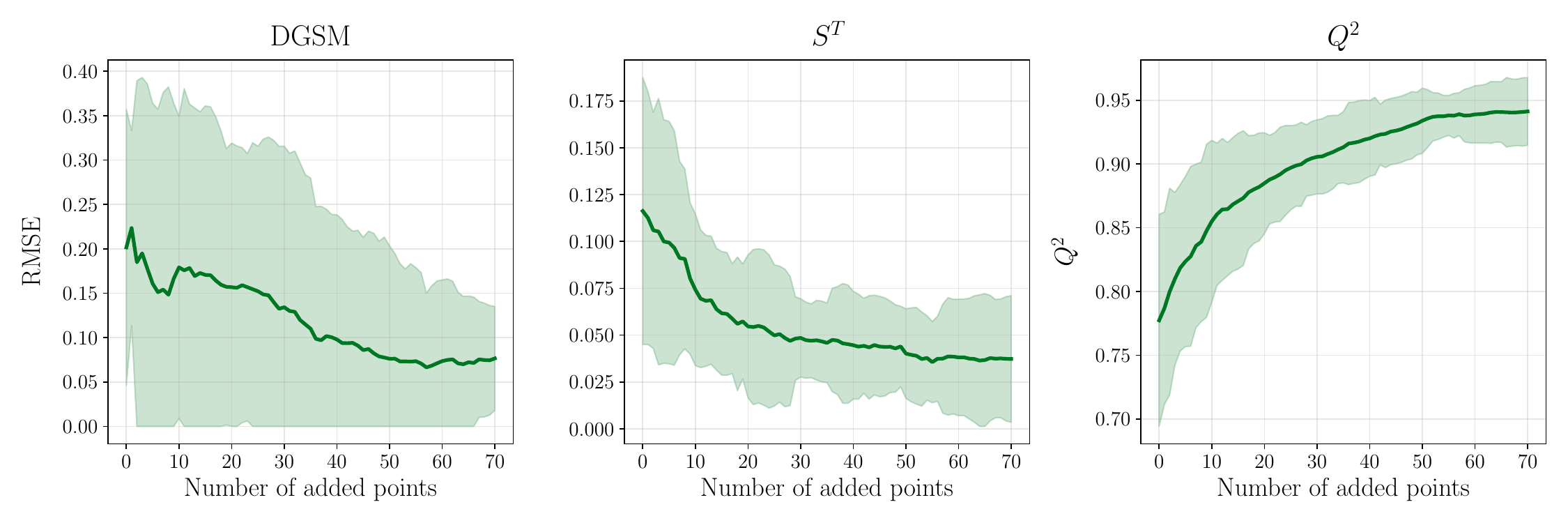}
    \caption{Active learning strategy on the simplified BUVARD-MES model.}
    \label{fig:resBuvard}
\end{figure}

\section{Conclusion}
This work proposes a gradient-based active learning framework for Gaussian process surrogates dedicated to global sensitivity analysis. By introducing acquisition functions that exploit the full gradient structure of the GP, the proposed approach preserves the correlation between derivative components and their joint impact on the response surface. This leads to metamodels that are both predictive and well suited for estimating DGSM and Sobol indices, outperforming existing gradient-based strategies that rely on partial-derivative information only. The methodology is further extended to settings where the input space is constrained by nonlinear physical or probabilistic conditions, such as regions of zero density. These extensions demonstrate the flexibility of the approach in realistic scenarios. The application to the BUVARD-MES model illustrates the practical benefits of gradient-based active learning in reducing computational cost while maintaining accurate sensitivity estimates. Future work may focus on improving acquisition function optimization under nonlinear constraints and on extending the proposed framework to more complex inputs and outputs, including functional data and spatiotemporal responses.

\section*{Acknowledgments}
This research was carried out with the support of the \href{https://doi.org/10.5281/zenodo.6581216}{CIROQUO} Applied Mathematics consortium, which brings together partners from both industry and academia to develop advanced methods for computational experimentation. The research is part of Water4All's AQUIGROW project, which aims to enhance the resilience of groundwater services under increased drought risk. The project has received funding from the European Union's Horizon Europe Program under Grant Agreement 101060874. This work was granted access to the HPC resources of PMCS2I (Pôle de Modélisation et de Calcul en Sciences de l'Ingénieur de l'Information) of the École Centrale de Lyon, Écully, France.
\section*{Data Availability Statement}
The research code associated with this article is available on \href{https://github.com/gulambert/ActiveDGSM}{GitHub}.

\printbibliography

@article{sobol_sensitivity_1993,
	title = {Sensitivity {Estimates} for {Nonlinear} {Mathematical} {Models}},
	author = {Sobol, I.},
	year = {1993},
}

@inproceedings{Belakaria,
 author = {Belakaria, S. and Letham, B. and Doppa, J. and Engelhardt, B. and Ermon, S. and Bakshy, E.},
 booktitle = {Advances in Neural Information Processing Systems},
 title = {Active Learning for Derivative-Based Global Sensitivity Analysis with Gaussian Processes},
 year = {2024},
doi = {10.52202/079017-1706}
}

@article{Lozzo,
author = {De Lozzo, M. and Marrel, A.},
title = {Estimation of the Derivative-Based Global Sensitivity Measures Using a Gaussian Process Metamodel},
journal = {SIAM/ASA Journal on Uncertainty Quantification},
volume = {4},
number = {1},
pages = {708-738},
year = {2016},
doi = {10.1137/15M1013377}
}

@book{gsa,
  title={Global Sensitivity Analysis: The Primer},
  author={Saltelli, A. and Ratto, M. and Andres, T. and Campolongo, F. and Cariboni, J. and Gatelli, D. and Saisana, M. and Tarantola, S.},
  isbn={9780470725177},
  year={2008},
  publisher={Wiley},
doi ={10.1002/9780470725184}
}

@book{gp,
    author = {Rasmussen, C E and Williams, C K. I.},
    title = {Gaussian Processes for Machine Learning},
    publisher = {The MIT Press},
    year = {2005},
    month = {11},
    isbn = {9780262256834},
    doi = {10.7551/mitpress/3206.001.0001},
}

@article{sobol1967distribution,
  title={On the distribution of points in a cube and the approximate evaluation of integrals},
  author={Sobol, I.M},
  journal={Zhurnal Vychislitel'noi Matematiki i Matematicheskoi Fiziki},
  volume={7},
  number={4},
doi = {10.1016/0041-5553(67)90144-9},
  pages={784--802},
  year={1967},
  publisher={Russian Academy of Sciences, Branch of Mathematical Sciences}
}

@article{DGSM,
title = {Derivative based global sensitivity measures},
journal = {Procedia - Social and Behavioral Sciences},
volume = {2},
number = {6},
pages = {7745-7746},
year = {2010},
note = {Sixth International Conference on Sensitivity Analysis of Model Output},
issn = {1877-0428},
doi = {10.1016/j.sbspro.2010.05.208},
author = {I.M. Sobol and S. Kucherenko},
}

@inproceedings{veillon2022buvardmes,
  author = {Veillon, F.  and Carluer, N. and Lauvernet, C. and Rabotin, M.},
  title = {{BUVARD-MES : Un outil en ligne pour dimensionner les zones tampons enherbées afin de limiter les transferts de pesticides vers les eaux de surface}},
  booktitle = {50e congrès du Groupe Français des Pesticides},
  eventdate = {2022-05-18/2022-05-20},
  venue = {NAMUR},
year = {2022},
  note={in french}

}

@inproceedings{gpytorch,
  title={GPyTorch: Blackbox Matrix-Matrix Gaussian Process Inference with GPU Acceleration},
  author={Gardner, J. R. and Pleiss, G. and Bindel, D. and Weinberger, K. Q. and Wilson, A. G.},
  booktitle={Advances in Neural Information Processing Systems},
  year={2018}
}

@inproceedings{botorch,
  title={{BoTorch: A Framework for Efficient Monte-Carlo Bayesian Optimization}},
  author={Balandat, Maximilian and Karrer, Brian and Jiang, Daniel R. and Daulton, Samuel and Letham, Benjamin and Wilson, Andrew Gordon and Bakshy, Eytan},
  booktitle = {Advances in Neural Information Processing Systems 33},
  year={2020}}

@Article{Carluer2017,
  Title                    = {Defining context-specific scenarios to design vegetated buffer zones that limit pesticide transfer via surface runoff },
  Author                   = {Carluer, N. and Lauvernet, C. and Noll, D. and Muñoz-Carpena,R.},
  Journal                  = {Science of the Total Environment },
  Year                     = {2017},
  Pages                    = {701 - 712},
  Volume                   = {575},
  Doi                      = {10.1016/j.scitotenv.2016.09.105},
  ISSN                     = {0048-9697}
}

@Article{Morcille_datapaper,
  author   = {Gouy, V. and Liger, L. and Ahrouch, S. and Bonnineau, C. and Carluer, N. and Chaumot, A. and Coquery, M. and Dabrin, A. and Margoum, C. and Pesce, S.},
  title    = {Ardières-Morcille in the Beaujolais, France: A research catchment dedicated to study of the transport and impacts of diffuse agricultural pollution in rivers},
  journal  = {Hydrological Processes},
  year     = {2021},
  volume   = {35},
  number   = {10},
  pages    = {e14384},
  doi      = {10.1002/hyp.14384},
}

@article{wycoff,
author = {N. Wycoff and M. Binois and S. M. Wild},
title = {Sequential Learning of Active Subspaces},
journal = {Journal of Computational and Graphical Statistics},
volume = {30},
number = {4},
pages = {1224--1237},
year = {2021},
publisher = {ASA Website},
doi = {10.1080/10618600.2021.1874962},
}

@book{boyd2004convex,
  title={Convex Optimization},
  author={Boyd, S. and Vandenberghe, L.},
  isbn={9781107394001},
  year={2004},
  publisher={Cambridge University Press}
}

@article{claire_brecheteau,
	author = {Brecheteau, C. and Genetay, E. and Mathieu, T. and Saumard, A},
	title = {Topics in robust statistical learning},
	DOI= "10.1051/proc/202374119",
	journal = {ESAIM: ProcS},
	year = 2023,
	volume = 74,
	pages = "119-136",
}

@article{picheny,
author = {Picheny, Victor and Wagner, Tobias and Ginsbourger, David},
title = {A benchmark of kriging-based infill criteria for noisy optimization},
year = {2013},
publisher = {Springer-Verlag},
address = {Berlin, Heidelberg},
volume = {48},
number = {3},
issn = {1615-147X},
doi = {10.1007/s00158-013-0919-4},
journal = {Struct. Multidiscip. Optim.},
pages = {607–626},
numpages = {20},
}

@article{QLHS,
author = {Lambert, G. and Helbert, C. and Lauvernet, C.},
title = {Quantization-Based Latin Hypercube Sampling for Dependent Inputs With an Application to Sensitivity Analysis of Environmental Models},
journal = {Applied Stochastic Models in Business and Industry},
volume = {41},
number = {3},
pages = {e2899},
keywords = {design of computer experiments, HSIC, Latin Hypercube Sampling (LHS), vector quantization},
doi = {10.1002/asmb.2899},
year = {2025}
}

@article{mckay_comparison_1979,
    title = {A {Comparison} of {Three} {Methods} for {Selecting} {Values} of {Input} {Variables} in the {Analysis} of {Output} from a {Computer} {Code}},
    volume = {21},
    issn = {00401706},
    doi = {10.2307/1268522},
    number = {2},
    journal = {Technometrics},
    author = {McKay, M. D. and Beckman, R. J. and Conover, W. J.},
    month = may,
    year = {1979},
    pages = {239},
}

@InProceedings{hsic,
author="Gretton, A.
and Bousquet, O.
and Smola, A.
and Sch{\"o}lkopf, B.",
title="Measuring Statistical Dependence with Hilbert-Schmidt Norms",
booktitle="Algorithmic Learning Theory",
year="2005",
publisher="Springer Berlin Heidelberg",
address="Berlin, Heidelberg",
pages="63--77",
}

@article{IMSE2,
    author = {Picheny, V. and Ginsbourger, D. and Roustant, O. and Haftka, R. T. and Kim, N-H.},
    title = {Adaptive Designs of Experiments for Accurate Approximation of a Target Region},
    journal = {Journal of Mechanical Design},
    volume = {132},
    number = {7},
    pages = {071008},
    year = {2010},
    month = {06},
    issn = {1050-0472},
    doi = {10.1115/1.4001873},
}

@article{IMSE3,
title = {Sequential design strategies for mean response surface metamodeling via stochastic kriging with adaptive exploration and exploitation},
journal = {European Journal of Operational Research},
volume = {262},
number = {2},
pages = {575-585},
year = {2017},
issn = {0377-2217},
doi = {10.1016/j.ejor.2017.03.042},
author = {Xi Chen and Qiang Zhou},
}

@article{IMSE1,
author = {Gauthier, B. and Pronzato, L.},
title = {Spectral Approximation of the IMSE Criterion for Optimal Designs in Kernel-Based Interpolation Models},
journal = {SIAM/ASA Journal on Uncertainty Quantification},
volume = {2},
number = {1},
pages = {805-825},
year = {2014},
doi = {10.1137/130928534}}

@inproceedings{gratiet,
  author    = {Le Gratiet, L. and Couplet, M. and Iooss, B. and Pronzato, L.},
  title     = {Planification d’exp{\'e}riences s{\'e}quentielle pour l’analyse de sensibilit{\'e}},
  booktitle = {46\`emes Journ{\'e}es de Statistique},
  address   = {Rennes, France},
  pages     = {60},
  year      = {2014}
}

@article{lamboni,
title = {Derivative-based global sensitivity measures: General links with Sobol’ indices and numerical tests},
journal = {Mathematics and Computers in Simulation},
volume = {87},
pages = {45-54},
year = {2013},
issn = {0378-4754},
doi = {10.1016/j.matcom.2013.02.002},
author = {M. Lamboni and B. Iooss and A.-L. Popelin and F. Gamboa},
}

@Inbook{GSA_review,
author="Iooss, B.
and Lema{\^i}tre, P.",
editor="Dellino, G.
and Meloni, C.",
title="A Review on Global Sensitivity Analysis Methods",
bookTitle="Uncertainty Management in Simulation-Optimization of Complex Systems: Algorithms and Applications",
year="2015",
publisher="Springer US",
address="Boston, MA",
pages="101--122",
isbn="978-1-4899-7547-8",
doi="10.1007/978-1-4899-7547-8_5",
}

@article{PCE_GSA,
title = {Global sensitivity analysis using polynomial chaos expansions},
journal = {Reliability Engineering \& System Safety},
volume = {93},
number = {7},
pages = {964-979},
year = {2008},
note = {Bayesian Networks in Dependability},
issn = {0951-8320},
doi = {10.1016/j.ress.2007.04.002},
author = {B. Sudret},
}

@article{MARREL2009742,
title = {Calculations of Sobol indices for the Gaussian process metamodel},
journal = {Reliability Engineering \& System Safety},
volume = {94},
number = {3},
pages = {742-751},
year = {2009},
issn = {0951-8320},
doi = {10.1016/j.ress.2008.07.008},
author = {A. Marrel and B. Iooss and B. Laurent and O. Roustant},
}

@article{blockVecchia,
author = {Qilong Pan and Sameh Abdulah and Marc G. Genton and Ying Sun},
title = {Block Vecchia Approximation for Scalable and Efficient Gaussian Process Computations},
journal = {Technometrics},
volume = {67},
number = {3},
pages = {546--558},
year = {2025},
publisher = {ASA Website},
doi = {10.1080/00401706.2025.2475784},
}

@article{Vecchia,
author = {M. Katzfuss and J. Guinness},
title = {{A General Framework for Vecchia Approximations of Gaussian Processes}},
volume = {36},
journal = {Statistical Science},
number = {1},
publisher = {Institute of Mathematical Statistics},
pages = {124 -- 141},
keywords = {computational complexity, covariance approximation, directed acyclic graphs, large datasets, Sparsity, spatial statistics},
year = {2021},
doi = {10.1214/19-STS755},
}

@article{constantine,
author = {Constantine, P. G. and Dow, E. and Wang, Q.},
title = {Active Subspace Methods in Theory and Practice: Applications to Kriging Surfaces},
journal = {SIAM Journal on Scientific Computing},
volume = {36},
number = {4},
pages = {A1500-A1524},
year = {2014},
doi = {10.1137/130916138},
}

@inproceedings{iooss_dgsm,
author = {Iooss, B. and Popelin, A.-L. and Blatman, G. and Ciric, C. and Gamboa, F. and Lacaze, S. and Lamboni, M.},
year = {2012},
month = {06},
pages = {},
title = {Some new insights in derivative-based global sensitivity measures}
}

@inproceedings{Snoek2012PracticalBO,
  title={Practical Bayesian Optimization of Machine Learning Algorithms},
  author={J. Snoek and H. Larochelle and R. P. Adams},
  booktitle={Neural Information Processing Systems},
  year={2012},
}

@Inbook{Ginsbourger2010,
author="Ginsbourger, D.
and {Le Riche}, R.
and Carraro, L.",
title="Kriging Is Well-Suited to Parallelize Optimization",
bookTitle="Computational Intelligence in Expensive Optimization Problems",
year="2010",
publisher="Springer Berlin Heidelberg",
address="Berlin, Heidelberg",
pages="131--162",
isbn="978-3-642-10701-6",
doi="10.1007/978-3-642-10701-6_6",
}

@inproceedings{Ginsbourger2008,
  title={A Multi-points Criterion for Deterministic Parallel Global Optimization based on Gaussian Processes},
  author={D. Ginsbourger and R. {Le Riche} and L. Carraro},
  year={2008},
}

@article{kmeansconstrained,
title = {A review on declarative approaches for constrained clustering},
journal = {International Journal of Approximate Reasoning},
volume = {171},
pages = {109135},
year = {2024},
note = {Synergies between Machine Learning and Reasoning},
issn = {0888-613X},
doi = {10.1016/j.ijar.2024.109135},
author = {Thi-Bich-Hanh Dao and Christel Vrain},

}

@software{kmeansconstrained_software,
  author = {Levy-Kramer, Josh},
  month = apr,
  title = {{k-means-constrained}},
  url = {https://github.com/joshlk/k-means-constrained},
  year = {2018}
}

@Inbook{blockGershgorin,
author="Varga, R. S.",
title="Ger{\v{s}}gorin-Type Theorems for Partitioned Matrices",
bookTitle="Ger{\v{s}}gorin and His Circles",
year="2004",
publisher="Springer Berlin Heidelberg",
address="Berlin, Heidelberg",
pages="155--187",
isbn="978-3-642-17798-9",
doi="10.1007/978-3-642-17798-9_6",
}

@book{basicstrends,
author = {Da Veiga, S. and Gamboa, F. and Iooss, B. and Prieur, C.},
title = {Basics and Trends in Sensitivity Analysis},
publisher = {Society for Industrial and Applied Mathematics},
year = {2021},
doi = {10.1137/1.9781611976694},
address = {Philadelphia, PA},
}

@book{niederreiter1992random,
author = {Niederreiter, Harald},
title = {Random Number Generation and Quasi-Monte Carlo Methods},
publisher = {Society for Industrial and Applied Mathematics},
year = {1992},
doi = {10.1137/1.9781611970081},
address = {},
edition   = {},
}

@article{SOBOL200780,
title = {Quasi-random points keep their distance},
journal = {Mathematics and Computers in Simulation},
volume = {75},
number = {3},
pages = {80-86},
year = {2007},
issn = {0378-4754},
doi = {10.1016/j.matcom.2006.09.004},
author = {I.M. Sobol and B.V. Shukhman},
}

@article{bect_sequential_2012,
	title = {Sequential design of computer experiments for the estimation of a probability of failure},
	volume = {22},
	issn = {1573-1375},
	doi = {10.1007/s11222-011-9241-4},
	pages = {773--793},
	number = {3},
	journaltitle = {Statistics and Computing},
	shortjournal = {Statistics and Computing},
	author = {Bect, Julien and Ginsbourger, David and Li, Ling and Picheny, Victor and Vazquez, Emmanuel},
	date = {2012-05-01},
}

@software{codeGitHub,
  author  = {Lambert, Guerlain},
  title   = {Python code for ``Gradient-based Active Learning with Gaussian Processes for
Global Sensitivity Analysis''},
  year    = {2025},
  url     = {https://github.com/gulambert/ActiveDGSM},
}

@article{blockGershgorin2,
author = {David G. Feingold and Richard S. Varga},
title = {{Block diagonally dominant matrices and generalizations of the Gerschgorin circle theorem.}},
volume = {12},
journal = {Pacific Journal of Mathematics},
number = {4},
publisher = {Pacific Journal of Mathematics, A Non-profit Corporation},
pages = {1241 -- 1250},
year = {1962},
}

@book{horn2012matrix,
  title={Matrix Analysis},
  author={Horn, R.A. and Johnson, C.R.},
  isbn={9781139788885},
  year={2012},
  publisher={Cambridge University Press}
}

@ARTICLE{constBO,
  author={Amini, Sasan and Vannieuwenhuyse, Inneke and Morales-Hernández, Alejandro},
  journal={IEEE Access}, 
  title={Constrained Bayesian Optimization: A Review}, 
  year={2025},
  volume={13},
  number={},
  pages={1581-1593},
  doi={10.1109/ACCESS.2024.3522876}}
\appendix
\section{Gradient of Matérn kernels derivation}\label{appendix:kernel}
In this section, we recall the classical derivations for the ARD Matérn--$5/2$ kernel that are used to implement the joint posterior distribution of the gradient of a GP. To this end, the expressions are written in matrix form as much as possible. For a more detailed derivation, we refer to the derivations provided in the GPyTorch PR that introduced \texttt{Matern52KernelGrad}.\footnote{\url{https://github.com/cornellius-gp/gpytorch/pull/2512}}. For two inputs $x, x' \in \mathbb{R}^d$, define
\[
\delta = x - x', \qquad
\Lambda = \mathrm{diag}(\theta_1,\dots,\theta_d), \qquad
\Lambda^{-2} = \mathrm{diag}(\theta_1^{-2},\dots,\theta_d^{-2}).
\]
Define the distance
\[
q(x,x') = \delta^{\top}\Lambda^{-2}\delta
= \sum_{i=1}^{d}\frac{\delta_i^{2}}{\theta_i^{2}},
\qquad
r(x,x') = \sqrt{q(x,x')}.
\]
The ARD Matérn--$5/2$ kernel is
\begin{equation*}
k(x,x')=\sigma^{2}f(r)e^{-\sqrt{5}r},
\quad
f(r)=1+\sqrt{5}r+\frac{5}{3}r^{2}.
\end{equation*}

\subsection{Gradient of the Matérn  kernel}

Fix $j\in\{1,\dots,d\}$. Since $q(x,x')=\sum_{i=1}^d \delta_i^2/\theta_i^2$ and
$\delta_j=x_j-x'_j$, we have
\[
\frac{\partial q}{\partial x_j}=2\frac{\delta_j}{\theta_j^{2}},
\qquad
\frac{\partial q}{\partial x'_j}=-2\frac{\delta_j}{\theta_j^{2}}.
\]
Moreover, for all $(x,x')\in\left(\mathbb{R}^d\right)^2$ such that $r(x,x')>0$,
\[
\frac{\partial r}{\partial x_j}
=\frac{1}{2}q^{-1/2}\frac{\partial q}{\partial x_j}
=\frac{\delta_j}{r\theta_j^{2}},
\qquad
\frac{\partial r}{\partial x'_j}
=-\frac{\delta_j}{r\theta_j^{2}}.
\]
In particular, for $r>0$,
\[
\nabla_x r=\frac{1}{r}\Lambda^{-2}\delta,
\quad
\nabla_{x'} r=-\frac{1}{r}\Lambda^{-2}\delta.
\]
Set for $r\geq0$, $g(r)=f(r)e^{-\sqrt{5}r}$. Since $f$ is differentiable on $\mathbb{R}^+$, then
$f'(r)=\sqrt{5}+\frac{10}{3}r$, we obtain
\[
g'(r)=\bigl(f'(r)-\sqrt5f(r)\bigr)e^{-\sqrt5 r}.
\]
A direct simplification yields, for all $r\ge 0$,
\[
f'(r)-\sqrt5 f(r)
=\left(\sqrt5+\frac{10}{3}r\right)-\sqrt5\left(1+\sqrt5 r+\frac{5}{3}r^2\right)
= -\frac{5}{3}r(1+\sqrt5 r),
\]
hence
\[
g'(r)=-\frac{5}{3}r(1+\sqrt5 r)e^{-\sqrt5 r}.
\]
Since $k(x,x')=\sigma^2 g(r)$, we finally derive,
\[
\frac{\partial k}{\partial r}
=
\sigma^2 g'(r)
=
-\frac{5\sigma^2}{3}r(1+\sqrt5 r)e^{-\sqrt5 r}.
\]
Therefore, for all $(x,x') \in (\mathbb{R}^d)^2$ such that $r(x,x')>0$,
\begin{equation}\label{eq:gradx_mat52}
\nabla_x k(x,x')
=
\frac{\partial k}{\partial r}\nabla_x r
=
-\frac{5\sigma^2}{3}(1+\sqrt5 r)e^{-\sqrt5 r}\Lambda^{-2}\delta,
\end{equation}
and similarly $\nabla_{x'}k(x,x')=-\nabla_x k(x,x')$.

\subsection{Hessian of the Matérn kernel}

Differentiating $\partial k/\partial r$ once more gives, for all $r\ge 0$,
\begin{equation}\label{eq:krr_mat52}
\frac{\partial^{2} k}{\partial r^{2}}
=
-\frac{5\sigma^{2}}{3}e^{-\sqrt{5}r}\bigl(1+\sqrt{5}r-5r^{2}\bigr).
\end{equation}
Then, let $i,j\in\{1,\dots,d\}$ and suppose $r(x,x')>0$. Using the identities
\[
\frac{\partial r}{\partial x_i}=\frac{\delta_i}{r\theta_i^{2}},
\qquad
\frac{\partial^{2} r}{\partial x_i\partial x_j}
=\frac{\delta_{ij}}{r\theta_i^{2}}
-\frac{\delta_i\delta_j}{r^{3}\theta_i^{2}\theta_j^{2}},
\]
we can write
\[
\frac{\partial^{2}k}{\partial x_i\partial x_j}
=
\frac{\partial^{2}k}{\partial r^{2}}
\frac{\partial r}{\partial x_i}\frac{\partial r}{\partial x_j}
+
\frac{\partial k}{\partial r}
\frac{\partial^{2} r}{\partial x_i\partial x_j}.
\]
Plugging the expressions of $\partial k/\partial r$ and $\partial^2 k/\partial r^2$
and grouping the terms in $\delta_i\delta_j$ gives
\[
\frac{1}{r^2}\frac{\partial^2 k}{\partial r^2}
-\frac{1}{r^3}\frac{\partial k}{\partial r}
=
\frac{25\sigma^2}{3}e^{-\sqrt5 r},
\]
so we obtain, for all $r>0$,
\begin{equation}\label{eq:hess_component_mat52}
\frac{\partial^{2}k}{\partial x_i\partial x_j}
=
\frac{5\sigma^{2}}{3}e^{-\sqrt{5}r}
\left[
5\frac{\delta_i\delta_j}{\theta_i^{2}\theta_j^{2}}
-(1+\sqrt{5}r)\frac{\delta_{ij}}{\theta_i^{2}}
\right].
\end{equation}
Equivalently, in matrix form (for $r>0$),
\begin{equation}\label{eq:hess_mat52}
\nabla_{x}\nabla_{x}k(x,x')
=
\frac{5\sigma^{2}}{3}e^{-\sqrt{5}r}
\Bigl[
5\Lambda^{-2}\delta\delta^{\top}\Lambda^{-2}
-(1+\sqrt{5}r)\Lambda^{-2}
\Bigr].
\end{equation}

Moreover, since $k(x,x')$ depends on $(x,x')$ only through $\delta=x-x'$,
we have $\nabla_{x'}=-\nabla_x$, then for all $r>0$,
\[
\nabla_{x}\nabla_{x'}k(x,x')
=
-\nabla_{x}\nabla_{x}k(x,x').
\]

Finally, the closed forms
\eqref{eq:gradx_mat52}--\eqref{eq:hess_mat52} admit continuous extensions at $r=0$. In particular, at $x=x'$
\[
\nabla_{x}\nabla_{x}k(x,x)
=
-\frac{5\sigma^{2}}{3}\Lambda^{-2}, \quad \nabla_{x}\nabla_{x'}k(x,x)
=
\frac{5\sigma^{2}}{3}\Lambda^{-2}
\]
\section{Proof of Proposition \ref{prop:chunked}}
\label{appendix:proof}
Recall that
$\Sigma_\nabla = [\Sigma_\nabla^{(ij)}]_{1\leq i,j\leq C}$ is the exact covariance of
$Z_{\mathbf{X}_s}$ ordered by clusters, and that
$\widetilde{\Sigma}_\nabla = \mathrm{diag}(\Sigma_\nabla^{(11)},\dots,\Sigma_\nabla^{(CC)})$
is its block-diagonal approximation. Define the residual 
\[
  E
  =
  \Sigma_\nabla - \widetilde{\Sigma}_\nabla
  =
  \begin{pmatrix}
    0 & \Sigma_\nabla^{(12)} & \cdots & \Sigma_\nabla^{(1C)} \\
    \Sigma_\nabla^{(21)} & 0 & \cdots & \Sigma_\nabla^{(2C)} \\
    \vdots & \vdots & \ddots & \vdots \\
    \Sigma_\nabla^{(C1)} & \Sigma_\nabla^{(C2)} & \cdots & 0
  \end{pmatrix}.
\]
By construction, $E$ has zero diagonal blocks and collects all cross-cluster
covariances.

Let
\[
  \widetilde{V}
  =
  2 \Tr\bigl(\widetilde{\Sigma}_\nabla^2\bigr)
  + 4 \bm{\mu}_\nabla^\top \widetilde{\Sigma}_\nabla \bm{\mu}_\nabla
\]
denote the chunked variance. Using the definitions of $V$ and $\widetilde{V}$,
and the triangle inequality, we obtain
\begin{equation}
  \label{eq:V_diff_laprems}
  \bigl| V - \widetilde{V}\bigr|
  \le
  2\bigl| \Tr(\Sigma_\nabla^2) - \Tr(\widetilde{\Sigma}_\nabla^2)\bigr|
  +
  4\bigl| \bm{\mu}_\nabla^\top \Sigma_\nabla \bm{\mu}_\nabla
         - \bm{\mu}_\nabla^\top \widetilde{\Sigma}_\nabla \bm{\mu}_\nabla\bigr|.
\end{equation}
We first show that
\[
  \Tr(\Sigma_\nabla^2) - \Tr(\widetilde{\Sigma}_\nabla^2)
  = \Tr(E^2) = \|E\|_F^2.
\]
Using $\Sigma_\nabla = \widetilde{\Sigma}_\nabla + E$,
\begin{align*}
  \Tr(\Sigma_\nabla^2)
  &= \Tr\bigl((\widetilde{\Sigma}_\nabla + E)^2\bigr)\\
  &= \Tr\bigl(\widetilde{\Sigma}_\nabla^2\bigr)
     + 2\Tr\bigl(\widetilde{\Sigma}_\nabla E\bigr)
     + \Tr(E^2).
\end{align*}
The matrix $\widetilde{\Sigma}_\nabla$ is block-diagonal with cluster-blocks
$\Sigma_\nabla^{(ii)}$ on the diagonal, while $E$ has zero diagonal blocks. Therefore,
$\widetilde{\Sigma}_\nabla E$ has zero diagonal blocks, and its trace vanishes:
\[
  \Tr\bigl(\widetilde{\Sigma}_\nabla E\bigr)
  = \sum_{i=1}^C \Tr\bigl(\Sigma_\nabla^{(ii)} E_{ii}\bigr)
  = 0.
\]
Thus
\[
  \Tr(\Sigma_\nabla^2) - \Tr(\widetilde{\Sigma}_\nabla^2)
  = \Tr(E^2).
\]
Since $E$ is symmetric, $\Tr(E^2) = \|E\|_F^2$, which gives the first term in the
error bound. Then, we write
\begin{align*}
  \bm{\mu}_\nabla^\top \Sigma_\nabla \bm{\mu}_\nabla
  - \bm{\mu}_\nabla^\top \widetilde{\Sigma}_\nabla \bm{\mu}_\nabla
  &= \bm{\mu}_\nabla^\top E \bm{\mu}_\nabla\\
  &= (E\bm{\mu}_\nabla)^\top \bm{\mu}_\nabla.
\end{align*}
By the Cauchy--Schwarz inequality and the definition of the spectral norm,
\[
  \bigl|\bm{\mu}_\nabla^\top E \bm{\mu}_\nabla\bigr|
  \le
  \|E\bm{\mu}_\nabla\|_2\|\bm{\mu}_\nabla\|_2
  \le
  \|E\|_2\|\bm{\mu}_\nabla\|_2^2.
\]
Substituting these two bounds in \eqref{eq:V_diff_laprems}, we obtain
\begin{equation}
  \label{eq:ineg_V}
  \bigl| V - \widetilde{V}\bigr|
  \le
  2\|E\|_F^2
  + 4\|\bm{\mu}_\nabla\|_2^2 \|E\|_2,
\end{equation}
which is the first statement of Proposition \ref{prop:chunked}.
We now derive the bounds on $\|E\|_F$ and $\|E\|_2$.

\subsection{Bounding the Hessian}
\label{appendix:hessian_bound}

Define the following quantities:
\[
  A = \frac{5\sigma^2}{3}, \quad
  \theta_\text{min} = \min_{1 \leq i \leq d } \theta_i, \quad
  L^2 = \sum_{i=1}^d \theta_i^{-4}, \quad
  \delta = x-x',\quad \delta_i = x_i - x_i', \quad \delta_{ab} =
  \begin{cases}
    1,& a=b,\\
    0,& a\neq b.
  \end{cases}
\]
Denote by $H(x, x') \in \mathcal{M}_{d}(\mathbb{R})$ the matrix with components
$H_{ab}(x,x') = \partial_{x_a}\partial_{x_b'}k(x,x')$.

\begin{lemma}\label{lemma:1}
    For any $(x, x')\in \left(\mathbb{R}^d\right)^2$ and $r = r(x,x')$,
    \[
      \Vert H(x, x') \Vert_F^2\leq h(r),
    \]
    where, for any $r \geq 0$,
    \[
      h(r) =
      2A^2\exp(-2\sqrt{5}r)\left(
        25\frac{r^4}{\theta_\text{min}^4}
        + (1+\sqrt{5}r)^2L^2
      \right).
    \]
\end{lemma}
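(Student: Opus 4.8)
The plan is to reduce the bound to a purely algebraic estimate on the Frobenius norm of a diagonal-plus-rank-one matrix. Starting from the closed form of the Hessian derived in Appendix~\ref{appendix:kernel}: by \eqref{eq:hess_mat52} together with $\nabla_x\nabla_{x'}k=-\nabla_x\nabla_x k$, one has, for $r=r(x,x')>0$,
\[
H(x,x') = A\,e^{-\sqrt{5}r}\,M,
\qquad
M = (1+\sqrt{5}r)\Lambda^{-2} - 5\,\Lambda^{-2}\delta\delta^\top\Lambda^{-2},
\]
so that $\|H\|_F^2 = A^2 e^{-2\sqrt{5}r}\|M\|_F^2$ and it remains to bound $\|M\|_F^2$.

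First I would split $M = M_1 + M_2$ into its diagonal part $M_1 = (1+\sqrt{5}r)\Lambda^{-2}$ and its rank-one part $M_2 = -5\Lambda^{-2}\delta\delta^\top\Lambda^{-2}$, and apply the elementary inequality $\|M_1+M_2\|_F^2 \le 2\|M_1\|_F^2 + 2\|M_2\|_F^2$; this is exactly where the overall factor $2$ in $h(r)$ comes from. The diagonal contribution is immediate, $\|M_1\|_F^2 = (1+\sqrt{5}r)^2\sum_{i=1}^d \theta_i^{-4} = (1+\sqrt{5}r)^2 L^2$. For the rank-one part, the $(a,b)$ entry of $M_2$ is $-5\,\delta_a\delta_b/(\theta_a^2\theta_b^2)$, so that $\|M_2\|_F^2 = 25\bigl(\sum_{a=1}^d \delta_a^2/\theta_a^4\bigr)^2$.

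The single estimate the proof really needs is the control of $\sum_a \delta_a^2/\theta_a^4$, and this is the crux (such as it is). Using $r^2 = \sum_a \delta_a^2/\theta_a^2$ and $\theta_a \ge \theta_{\min}$,
\[
\sum_{a=1}^d \frac{\delta_a^2}{\theta_a^4}
= \sum_{a=1}^d \frac{\delta_a^2}{\theta_a^2}\cdot\frac{1}{\theta_a^2}
\le \frac{1}{\theta_{\min}^2}\sum_{a=1}^d \frac{\delta_a^2}{\theta_a^2}
= \frac{r^2}{\theta_{\min}^2},
\]
hence $\|M_2\|_F^2 \le 25\,r^4/\theta_{\min}^4$. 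Combining the two contributions gives $\|M\|_F^2 \le 2(1+\sqrt{5}r)^2 L^2 + 50\,r^4/\theta_{\min}^4$, and multiplying by $A^2 e^{-2\sqrt{5}r}$ reproduces $h(r)$ exactly.

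Finally, since \eqref{eq:hess_mat52} extends continuously to $r=0$ (Appendix~\ref{appendix:kernel}), the bound holds there by continuity; indeed $\delta=0$ then gives $\|H\|_F^2 = A^2 L^2 \le h(0) = 2A^2 L^2$, so the estimate is valid on all of $(\mathbb{R}^d)^2$. I expect no genuine obstacle here: the only care required is tracking constants through the $2\|M_1\|_F^2+2\|M_2\|_F^2$ split. It is worth noting that this split is slightly wasteful, since the Frobenius inner product $\langle M_1,M_2\rangle_F = -5(1+\sqrt{5}r)\sum_a \delta_a^2/\theta_a^6 \le 0$ is nonpositive, so one actually has $\|M\|_F^2 \le \|M_1\|_F^2+\|M_2\|_F^2$ and could remove the factor $2$; the stated form of $h$ nevertheless suffices for Proposition~\ref{prop:chunked}.
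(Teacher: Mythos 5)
Your proof is correct and follows essentially the same route as the paper's: the paper argues entry-wise (bounding $|H_{ab}|$ via the triangle inequality and $(a+b)^2\le 2(a^2+b^2)$), which is exactly your diagonal-plus-rank-one Frobenius split in coordinates, and the key estimate $\sum_a \delta_a^2/\theta_a^4 \le r^2/\theta_{\min}^2$ is identical. Your observation that the cross term $\langle M_1,M_2\rangle_F\le 0$ would let one drop the factor $2$ is a valid (if unexploited in the paper) sharpening.
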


\begin{proof}
    Let $(x,x') \in (\mathbb{R}^d)^2$ and $1 \leq a,b \leq d$, and write
    $H_{ab} := H_{ab}(x,x')$.
    From the derivations in Appendix \ref{appendix:kernel}, we have
    \begin{align*}
        \vert H_{ab}\vert
        &=\frac{5\sigma^2}{3\theta_a^2}\exp(-\sqrt{5}r)
          \left\vert \frac{5(x_a - x_a')(x_b-x_b')}{\theta_b^2}
                     - \left(1+\sqrt{5}r\right)\delta_{ab}\right\vert\\
        &= \frac{A}{\theta_a^2}\exp(-\sqrt{5}r)
          \left\vert \frac{5\delta_a \delta_b}{\theta_b^2}
                     - \left(1+\sqrt{5}r\right)\delta_{ab}\right\vert\\
        &\leq \frac{A}{\theta_a^2}\exp(-\sqrt{5}r)
          \left(
            \frac{5\vert \delta_a\delta_b\vert}{\theta_b^2}
            + \left(1+\sqrt{5}r\right)\delta_{ab}
          \right).
    \end{align*}
    Define
    \[
      u_{ab} := \frac{\vert \delta_a\delta_b\vert}{\theta_a^2\theta_b^2}, \qquad
      v_{ab} := \frac{1 + \sqrt{5}r}{\theta_a^2}\delta_{ab},
    \]
    so that
    \[
      |H_{ab}|
      \leq A\exp(-\sqrt{5}r)\left(5u_{ab} + v_{ab}\right).
    \]
    Squaring and using $(a+b)^2 \leq 2(a^2+b^2)$, we obtain
    \begin{align*}
        H_{ab}^2
        &\leq
        2A^2 \exp(-2\sqrt{5}r)\left(25u_{ab}^2 + v_{ab}^2\right).
    \end{align*}
    Summing over $a$ and $b$ gives
    \begin{align*}
      \Vert H(x,x') \Vert_F^2
      &= \sum_{1\leq a,b\leq d} H_{ab}^2\\
      &\leq 2 A^2\exp(-2\sqrt{5}r)
        \left(
          25\sum_{1\leq a,b\leq d}u_{ab}^2
          + \sum_{1\leq a,b\leq d}v_{ab}^2
        \right).
    \end{align*}
    For the first sum, note that
    \[
      \sum_{a,b} u_{ab}^2
      = \sum_{a,b}
        \frac{\delta_a^2}{\theta_a^4}
        \frac{\delta_b^2}{\theta_b^4}
      = \left(
          \sum_{a=1}^d
          \frac{\delta_a^2}{\theta_a^4}
        \right)^2.
    \]
    Moreover,
    \[
      \sum_{a=1}^d\frac{\delta_a^2}{\theta_a^4}
      \leq \frac{1}{\theta_\text{min}^2}
          \sum_{a=1}^d\frac{\delta_a^2}{\theta_a^2}
      = \frac{r^2}{\theta_\text{min}^2},
    \]
    so that
    \[
      \sum_{a,b} u_{ab}^2
      \leq \frac{r^4}{\theta_\text{min}^4}.
    \]
    For the second sum, using $\delta_{ab}^2=\delta_{ab}$,
    \begin{align*}
      \sum_{a,b} v_{ab}^2
      &= \sum_{a,b}
         \frac{(1+\sqrt{5}r)^2}{\theta_a^4}\delta_{ab}\\
      &= (1+\sqrt{5}r)^2 \sum_{a=1}^d\theta_a^{-4}
       = (1+\sqrt{5}r)^2 L^2.
    \end{align*}
    Plugging these bounds in the inequality above yields
    \[
      \|H(x,x')\|_F^2
      \le
      2A^2\exp(-2\sqrt{5}r)\left(
        25\frac{r^4}{\theta_\text{min}^4}
        + (1+\sqrt{5}r)^2L^2
      \right)
      = h(r),
    \]
    as claimed.
\end{proof}

\subsection{Bounds on $\Vert E\Vert_F$ and $\Vert E\Vert_2$}
\begin{proposition}\label{prop:frob_bound}
    Let $\Delta_{ij} = \min_{x\in C_{i}, x'\in C_{j}} r(x,x')$ for $i\neq j$
    and $\Delta = \min_{i\neq j} \Delta_{ij}$. Then, for clusters of sizes
    $n_1, \dots, n_C$ with $N = \sum_{i=1}^C n_i$, we have
    \begin{align}
        \Vert E\Vert_F^2
        &\leq \sum_{i\neq j}n_i n_j h(\Delta_{ij})
        \leq \left(N^2 - \sum_{i=1}^C n_i^2\right)h(\Delta),
        \label{eq:E_F_final}\\[0.4em]
        \Vert E\Vert_2
        &\leq B\sqrt{h(\Delta)},
        \label{eq:E_2_final}
    \end{align}
    where $h$ is given in Lemma \ref{lemma:1} and
    \[
      B =
      \begin{cases}
        N\bigl(1-\tfrac{1}{C}\bigr), & \text{for balanced clusters }(n_i=n_j),\\[0.3em]
        \dfrac{N + (C-2)n_\text{max}}{2},
        & \text{for imbalanced clusters, with } n_\text{max} = \max_i n_i.
      \end{cases}
    \]
\end{proposition}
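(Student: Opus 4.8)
The plan is to reduce both bounds to the single per-block estimate of Lemma \ref{lemma:1}, which controls the Frobenius norm of each $d\times d$ gradient cross-covariance block $H(\mathbf{x},\mathbf{x}')$ by $\sqrt{h(r(\mathbf{x},\mathbf{x}'))}$. The starting observation is that $E$ is a symmetric $C\times C$ block matrix with vanishing diagonal blocks, whose $(i,j)$ block $\Sigma_\nabla^{(ij)}$ is itself partitioned into $n_i\times n_j$ sub-blocks of size $d\times d$, the $(\ell,\ell')$ sub-block being the gradient covariance between $\mathbf{x}^{(s)}_\ell\in C_i$ and $\mathbf{x}^{(s)}_{\ell'}\in C_j$. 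Since any two points lying in distinct clusters satisfy $r(\mathbf{x}^{(s)}_\ell,\mathbf{x}^{(s)}_{\ell'})\ge\Delta_{ij}\ge\Delta$, every such sub-block is governed by Lemma \ref{lemma:1} evaluated at a radius at least $\Delta$. I will use throughout that $h$ is non-increasing on $[\Delta,\infty)$, which holds as soon as the clusters are sufficiently separated for the exponential factor to dominate the polynomial one.

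For the Frobenius bound I would first use that the squared Frobenius norm is additive over any block partition, so that $\|E\|_F^2=\sum_{i\ne j}\|\Sigma_\nabla^{(ij)}\|_F^2$ because the diagonal blocks of $E$ are zero. Expanding each off-diagonal block into its $d\times d$ sub-blocks and applying Lemma \ref{lemma:1} gives $\|\Sigma_\nabla^{(ij)}\|_F^2=\sum_{\ell\in C_i,\,\ell'\in C_j}\|H(\mathbf{x}^{(s)}_\ell,\mathbf{x}^{(s)}_{\ell'})\|_F^2\le n_i n_j\,h(\Delta_{ij})$, using $r\ge\Delta_{ij}$ and monotonicity of $h$. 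Summing over $i\ne j$ yields the first inequality $\|E\|_F^2\le\sum_{i\ne j}n_i n_j h(\Delta_{ij})$, and bounding every $h(\Delta_{ij})$ by $h(\Delta)$ together with the elementary identity $\sum_{i\ne j}n_i n_j=\bigl(\sum_i n_i\bigr)^2-\sum_i n_i^2=N^2-\sum_i n_i^2$ gives \eqref{eq:E_F_final}.

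For the spectral bound I would pass to the $C\times C$ nonnegative matrix $G$ of block operator norms, $G_{ij}=\|\Sigma_\nabla^{(ij)}\|_2$ and $G_{ii}=0$, and invoke the standard norm-of-norms inequality $\|E\|_2\le\|G\|_2$ for partitioned matrices. As $G$ is symmetric and nonnegative, its spectral norm equals its spectral radius and is bounded by its maximal absolute row sum, giving $\|E\|_2\le\max_i\sum_{j\ne i}G_{ij}$. Each block norm is bounded by its Frobenius norm, $G_{ij}\le\|\Sigma_\nabla^{(ij)}\|_F\le\sqrt{n_i n_j\,h(\Delta)}$, so it remains to estimate $\sqrt{h(\Delta)}\,\max_i\sqrt{n_i}\sum_{j\ne i}\sqrt{n_j}$. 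Applying the arithmetic--geometric inequality $\sqrt{n_i n_j}\le\tfrac12(n_i+n_j)$ turns the inner sum into $\tfrac12\bigl[(C-1)n_i+(N-n_i)\bigr]=\tfrac12\bigl[(C-2)n_i+N\bigr]$, which is maximized at $n_i=n_{\max}$ and produces the constant $B=\tfrac12\bigl(N+(C-2)n_{\max}\bigr)$, hence \eqref{eq:E_2_final}. For balanced clusters $n_i=N/C$ the arithmetic--geometric step is an equality and $B$ collapses to $N(1-1/C)$, which is exactly the sharper balanced constant.

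The main obstacle is the spectral bound: the delicate points are the justification of the norm-of-norms inequality $\|E\|_2\le\|G\|_2$ and the choice of the arithmetic--geometric relaxation that yields precisely the stated form of $B$ while making the balanced case tight. A secondary technical point is the monotonicity of $h$ needed to replace each $h(\Delta_{ij})$ by $h(\Delta)$; since $h$ is a polynomial times $e^{-2\sqrt5 r}$, it is eventually decreasing, and one only needs $\Delta$ to lie beyond its maximizer, which I would record as a mild separation hypothesis. Finally, Lemma \ref{lemma:1} controls the prior kernel-Hessian blocks, whereas the blocks of $E$ are the conditional cross-covariances; I would either absorb the conditioning correction into the same bound, on the grounds that the stationary-kernel decay of the prior governs the cross-cluster posterior covariance, or bound that correction separately, noting it is negligible once clusters are well separated.
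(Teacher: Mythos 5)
Your argument follows essentially the same route as the paper: the same additive Frobenius decomposition over off-diagonal cluster blocks combined with Lemma \ref{lemma:1}, and for the spectral bound the same reduction to the $C\times C$ matrix of block norms bounded by its maximal row sum (this is exactly the paper's Lemma \ref{lem:block_E_spec_ineg}), followed by $\|E_{ij}\|_2\le\|E_{ij}\|_F\le\sqrt{n_in_j\,h(\Delta)}$ and the same AM--GM step yielding $B$. Two remarks. First, you only assume $h$ is non-increasing beyond its maximizer and propose to add a ``mild separation hypothesis'' on $\Delta$; the proposition as stated carries no such hypothesis, and the paper closes this by proving (Lemma \ref{lem:h_qui_decroit}) that $h$ is in fact non-increasing on all of $\mathbb{R}^+$ --- the derivative computation reduces to showing a cubic $G(r)=-5\sqrt5 r^3+10r^2-\sqrt5 r-1$ stays at most $-1$ on $\mathbb{R}^+$. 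As written, your proof establishes a conditional version of the statement, so you should either prove the global monotonicity or accept the weaker claim. Second, your closing concern about Lemma \ref{lemma:1} bounding the \emph{prior} kernel-Hessian blocks while $E$ is built from the \emph{posterior} cross-covariances $\Sigma_\nabla^{(ij)}=\nabla_x\nabla_{x'}k-\nabla_xk\,\mathbf{K}^{-1}\nabla_{x'}k$ is a legitimate subtlety: the paper's own proof silently identifies $\|E_{ij}\|_F^2$ with $\sum\|H(x,x')\|_F^2$, which is only exact for the prior. Neither of your two suggested fixes is carried out rigorously, but you deserve credit for noticing a point the paper itself does not address.
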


\begin{proof}
    The block matrix $E$ is composed of the off-diagonal cluster-blocks
    $\Sigma_\nabla^{(ij)}$ for $i\neq j$. Thus,
    \[
      \Vert E \Vert_F^2
      = \sum_{i\neq j}\Vert E_{ij}\Vert_F^2
      = \sum_{i\neq j}\sum_{x\in C_{i}}\sum_{x'\in C_j}
        \Vert H(x, x')\Vert_F^2,
    \]
    where $E_{ij} = \Sigma_\nabla^{(ij)}$.
    Applying Lemma \ref{lemma:1} to every pair $(x,x')$ yields
    \[
      \Vert E \Vert_F^2
      \leq \sum_{i\neq j}\sum_{x\in C_{i}}\sum_{x'\in C_j} h(r(x,x')).
    \]
    Since $h$ is non-increasing on $\mathbb{R}^+$ (see Lemma \ref{lem:h_qui_decroit}) and, for each cluster pair
    $i\neq j$,
    \[
      \Delta_{ij}
      = \min_{x\in C_i, x'\in C_j} r(x,x'),
    \]
    we obtain
    \begin{align*}
        \Vert E\Vert_F^2
        &\leq \sum_{i\neq j}\sum_{x\in C_{i}}\sum_{x'\in C_j} h(\Delta_{ij})\\
        &= \sum_{i\neq j}n_i n_j h(\Delta_{ij})\\
        &\leq \left(N^2 - \sum_{i=1}^C n_i^2\right)h(\Delta),
    \end{align*}
    which proves \eqref{eq:E_F_final}.
From Lemma \ref{lem:block_E_spec_ineg},
\[
  \|E\|_2 \leq \max_{1\leq i\leq C}\sum_{j\neq i}\|E_{ij}\|_2.
\]
Since $\|E_{ij}\|_2 \leq \|E_{ij}\|_F$ and
    \[
      \|E_{ij}\|_F^2
      = \sum_{x\in C_i}\sum_{x'\in C_j}\|H(x,x')\|_F^2
      \leq n_i n_j h(\Delta_{ij}),
    \]
    Lemma \ref{lemma:1} implies
    \[
      \|E_{ij}\|_F
      \leq \sqrt{n_i n_j}\sqrt{h(\Delta_{ij})}
      \leq \sqrt{n_i n_j}\sqrt{h(\Delta)}.
    \]
    Hence
    \[
      \|E\|_2
      \le
      \sqrt{h(\Delta)}
      \max_{1\leq i\leq C}
      \sum_{j\neq i} \sqrt{n_i n_j}.
    \]
    We now bound the combinatorial quantity
    \[
      S_i := \sum_{j\neq i} \sqrt{n_i n_j}.
    \]

    If the clusters are balanced, with $n_i = N/C$ for all $i$, then for any $i$
    \[
      S_i = (C-1)\frac{N}{C} = N\left(1-\frac{1}{C}\right),
    \]
    which gives $B = N(1-1/C)$.

    If the clusters are imbalanced, we use the AM--GM inequality:
    for every $j\neq i$,
    \[
      \sqrt{n_i n_j}\leq \frac{1}{2}(n_i + n_j).
    \]
    Summing over $j\neq i$ yields
    \begin{align*}
      \sum_{j\neq i} \sqrt{n_i n_j}
      &\leq \frac{1}{2}\left(\sum_{j\neq i}n_i + \sum_{j\neq i} n_j\right)\\
      &= \frac{1}{2}\left((C-1)n_i + N - n_i\right)
       = \frac{1}{2}\left(N +(C-2)n_i\right)\\
      &\leq \frac{1}{2}\left(N +(C-2)n_\text{max}\right),
    \end{align*}
    where $n_\text{max} = \max_i n_i$. Taking the maximum over $i$
    completes the proof of \eqref{eq:E_2_final}.
\end{proof}

Combining \eqref{eq:ineg_V} with
\eqref{eq:E_F_final} and \eqref{eq:E_2_final} yields the bounds stated in
Proposition \ref{prop:chunked}. Here are two lemmas to justify the monotonicity of $h$ and a spectral norm bound used in the previous proof.
\begin{lemma}[Monotonicity of $h$]\label{lem:h_qui_decroit}
The function $h : \mathbb{R}^+ \to \mathbb{R}$ defined in Lemma \ref{lemma:1} is non-increasing on $\mathbb{R}^+$.
\end{lemma}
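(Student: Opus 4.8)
The plan is to show directly that $h'(r)\le 0$ for every $r\ge 0$. Writing $h(r)=2A^2 e^{-2\sqrt5 r}\,g(r)$ with $g(r)=\frac{25}{\theta_{\min}^4}r^4+L^2(1+\sqrt5 r)^2$, differentiation gives
\[
h'(r)=2A^2 e^{-2\sqrt5 r}\bigl(g'(r)-2\sqrt5\,g(r)\bigr).
\]
Since the prefactor $2A^2 e^{-2\sqrt5 r}$ is strictly positive, the sign of $h'$ is dictated entirely by the bracket, so it suffices to prove $g'(r)\le 2\sqrt5\,g(r)$ on $\mathbb{R}^+$. Note that a naive ``product of monotone factors'' argument is hopeless here: the factor $e^{-2\sqrt5 r}$ decreases but the polynomial $g$ increases, so monotonicity of $h$ is genuinely a competition between the two.

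First I would compute the difference explicitly. Grouping the $r^4$-term against the $(1+\sqrt5 r)^2$-term yields
\[
g'(r)-2\sqrt5\,g(r)=\frac{50}{\theta_{\min}^4}\,r^3(2-\sqrt5 r)-10L^2\,r(1+\sqrt5 r).
\]
The second term is manifestly non-positive for $r\ge 0$. The delicate point is the first term, which is \emph{positive} on $[0,2/\sqrt5)$: the contribution of $25r^4/\theta_{\min}^4$ makes $g$ increase near the origin, so the inequality cannot be established termwise and the bound is not obvious.

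The key idea to close this gap is the structural inequality $L^2=\sum_{i=1}^d\theta_i^{-4}\ge\theta_{\min}^{-4}$, which allows the decreasing $L^2$-contribution to dominate the increasing $\theta_{\min}^{-4}$-contribution. For $r>0$ I would divide through by $10r>0$ and bound $L^2\ge\theta_{\min}^{-4}$, reducing the target to the scalar statement $5r^2(2-\sqrt5 r)\le 1+\sqrt5 r$ for all $r\ge 0$ (the case $r=0$ being trivial, and the case $r\ge 2/\sqrt5$ being immediate since the left-hand side is then non-positive).

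Finally, I would settle the scalar inequality by the substitution $t=\sqrt5\,r\ge 0$, which turns it into $t^3-2t^2+t+1\ge 0$, and then exploit the factorization $t^3-2t^2+t+1=t(t-1)^2+1\ge 1>0$. This collapses the entire argument. The main obstacle is the realization that termwise monotonicity fails and that the correct route is to trade $L^2$ against $\theta_{\min}^{-4}$ \emph{before} reducing to an elementary cubic bound; once that step is identified, the remaining differentiation and the polynomial factorization are routine.
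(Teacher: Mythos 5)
Your proof is correct and follows essentially the same route as the paper: differentiate $h$, factor out the positive exponential prefactor, simplify the bracket to $\frac{50r^3}{\theta_{\min}^4}(2-\sqrt5 r)-10L^2 r(1+\sqrt5 r)$, and invoke $L^2\ge\theta_{\min}^{-4}$ to reduce to a single cubic inequality. The only difference is the last step, where the paper establishes $G(r)=-5\sqrt5 r^3+10r^2-\sqrt5 r-1\le -1$ by locating the critical points of $G$, whereas your substitution $t=\sqrt5 r$ and the factorization $t^3-2t^2+t+1=t(t-1)^2+1\ge 1$ dispatches the same inequality more cleanly.
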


\begin{proof}
Since $h$ is a product of a polynomial in $r$ and an exponential function, then $h\in\mathcal{C}^1(\mathbb{R}^+;\mathbb{R)}$. Then, for all $r\in \mathbb{R}^+$,
\begin{align*}
h'(r)&=2A^2 e^{-2\sqrt{5}r}\left(\frac{100r^3}{\theta_\text{min}^4} + 2L^2\sqrt{5}\left(1+\sqrt{5}r\right)  - 2\sqrt{5}\left(\frac{25r^4}{\theta_\text{min}^4 }+ \left(1 + \sqrt{5}r\right)^2L^2\right)\right)\\
& = 2A^2 e^{-2\sqrt{5}r} g(r)
\end{align*}
Since $2A^2 e^{-2\sqrt{5}r}$ is positive for all $r\in\mathbb{R}^+$, it is enough to prove that $g(r)\leq 0$.
Moreover, for all $r\in\mathbb{R}^+$,
\begin{align*}
g(r)
&=
\left(100\frac{r^3}{\theta_{\min}^4}+2\sqrt{5}(1+\sqrt{5}r)L^2\right)-2\sqrt{5}\left(25\frac{r^4}{\theta_{\min}^4}+(1+\sqrt{5}r)^2L^2\right)\\
&=
\frac{50r^3}{\theta_{\min}^4}\bigl(2-\sqrt{5}r\bigr)
-2\sqrt{5}L^2(1+\sqrt{5}r)\bigl((1+\sqrt{5}r)-1\bigr)\\
&=
\frac{50r^3}{\theta_{\min}^4}\bigl(2-\sqrt{5}r\bigr)
-10r(1+\sqrt{5}r)L^2.
\end{align*}
Using $L^2=\sum_{i=1}^d\theta_i^{-4}\geq \theta_{\min}^{-4}$, we obtain for all $r\geq 0$,
\begin{align*}
g(r)
&\leq
\frac{50r^3}{\theta_{\min}^4}\bigl(2-\sqrt{5}r\bigr)
-\frac{10r(1+\sqrt{5}r)}{\theta_{\min}^4}\\
&=
\frac{10r}{\theta_{\min}^4}\Bigl(5r^2(2-\sqrt{5}r)-(1+\sqrt{5}r)\Bigr)
=
\frac{10r}{\theta_{\min}^4}G(r),
\end{align*}
where
\[
G(r)=-5\sqrt{5}r^3+10r^2-\sqrt{5}r-1.
\]
One remark that, $G(0)=-1$ and
\[
G'(r)=-15\sqrt{5}r^2+20r-\sqrt{5}.
\]
The roots of $G'$ are $r=\frac{\sqrt{5}}{15}$ and $r=\frac{\sqrt{5}}{5}$. Moreover,
\[
G\left(\frac{\sqrt{5}}{15}\right)=-\frac{31}{27}<-1,
\quad
G\left(\frac{\sqrt{5}}{5}\right)=-1,
\quad
G(0)=-1,
\]
Since, $G(r) \to -\infty$ as $r \to +\infty$, then the global maximum on $\mathbb{R}^+$ is $\max_{r\geq 0} G(r) = -1$. Hence,  $G(r)\leq -1$ for all $r\geq 0$. Therefore, $
g(r)\leq \frac{10r}{\theta_{\min}^4}\cdot (-1)\leq 0$, which implies $h'(r)\leq 0$.
\end{proof}
\begin{lemma}[Spectral norm]\label{lem:block_E_spec_ineg}
Given the residual $E = \Sigma_\nabla - \widetilde{\Sigma}_\nabla$, then
\[
\|E\|_2 \le \max_{1\le i\le C}\sum_{j\neq i}\|E_{ij}\|_2.
\]
\end{lemma}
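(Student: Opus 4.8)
The plan is to reduce this block operator-norm inequality to a scalar operator-norm inequality for a small $C\times C$ nonnegative matrix assembled from the block norms, and then to invoke the elementary bound of the spectral norm by the maximal absolute row sum. The whole argument is of ``collapse to a reduced matrix'' type and proceeds by testing $E$ against an arbitrary vector.

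First I would fix an arbitrary $v\in\mathbb{R}^{Nd}$ and partition it conformally with the cluster ordering as $v=(v_1,\dots,v_C)$ with $v_i\in\mathbb{R}^{n_i d}$. Because $E$ has vanishing diagonal blocks ($E_{ii}=0$), the $i$-th block of $Ev$ is $(Ev)_i=\sum_{j\neq i}E_{ij}v_j$. Applying the triangle inequality in $\mathbb{R}^{n_i d}$ together with submultiplicativity of the spectral norm yields $\|(Ev)_i\|_2\le \sum_{j\neq i}\|E_{ij}\|_2\,\|v_j\|_2$. Next I would introduce the reduced nonnegative matrix $A=(a_{ij})_{1\le i,j\le C}$ with $a_{ij}=\|E_{ij}\|_2$ for $i\neq j$ and $a_{ii}=0$, and the nonnegative vector $w=(\|v_1\|_2,\dots,\|v_C\|_2)^\top\in\mathbb{R}^C$. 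Summing the squared block bounds gives $\|Ev\|_2^2=\sum_i\|(Ev)_i\|_2^2\le \sum_i\bigl(\sum_j a_{ij}w_j\bigr)^2=\|Aw\|_2^2$, hence $\|Ev\|_2\le\|A\|_2\,\|w\|_2$. Since $\|w\|_2^2=\sum_i\|v_i\|_2^2=\|v\|_2^2$, taking the supremum over nonzero $v$ delivers $\|E\|_2\le\|A\|_2$.

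It then remains to control $\|A\|_2$, and here the key observation is that $A$ is symmetric: since $\Sigma_\nabla$ is a covariance matrix and $\widetilde{\Sigma}_\nabla$ is block-diagonal with symmetric blocks, $E$ is symmetric, so $E_{ji}=E_{ij}^\top$ and therefore $a_{ji}=\|E_{ji}\|_2=\|E_{ij}^\top\|_2=\|E_{ij}\|_2=a_{ij}$. For a symmetric matrix the spectral norm equals the spectral radius, and either via the generic bound $\|A\|_2\le\sqrt{\|A\|_1\|A\|_\infty}$ (whose two factors coincide by symmetry) or directly by Gershgorin's disc theorem (all eigenvalues lie in $[-R_i,R_i]$ with $R_i=\sum_{j\neq i}a_{ij}$, as $a_{ii}=0$), one obtains $\|A\|_2\le\max_i\sum_{j\neq i}a_{ij}=\max_i\sum_{j\neq i}\|E_{ij}\|_2$. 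Combining this with $\|E\|_2\le\|A\|_2$ closes the proof.

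The computation is essentially routine, so I do not expect a genuine obstacle; the one place requiring care is the symmetry of the reduced matrix $A$, which is precisely what lets the maximal absolute row sum (rather than some asymmetric surrogate) dominate $\|A\|_2$. I would flag this as the step not to treat cavalierly, since the generic estimate $\|A\|_2\le\sqrt{\|A\|_1\|A\|_\infty}$ collapses to the stated row-sum form only once symmetry forces $\|A\|_1=\|A\|_\infty$.
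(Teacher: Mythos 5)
Your proof is correct and follows essentially the same route as the paper's: collapse $E$ to the reduced nonnegative matrix $A$ of block spectral norms via the blockwise triangle inequality, then bound $\|A\|_2$ by its maximal row sum using symmetry (the paper likewise invokes $\|A\|_2\le\sqrt{\|A\|_1\|A\|_\infty}=\|A\|_\infty$ and cites Gershgorin as an alternative). Your explicit justification that $a_{ji}=\|E_{ij}^\top\|_2=\|E_{ij}\|_2$ is a welcome detail the paper states more tersely.
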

\begin{proof}
The following is a classical result on block‑operator norms. It can also be seen as a consequence of Gershgorin theorem for symmetric matrices \cite{horn2012matrix, blockGershgorin, blockGershgorin2}. For completeness, we give a short proof based on induced‑norm estimates. We use the block structure of $E$. For a vector
$v = (v_1,\dots,v_C)$ with $v_i\in\mathbb{R}^{n_i d}$, we have
\[
  (Ev)_i = \sum_{j\neq i} E_{ij} v_j.
\]
Hence, by the triangle inequality and submultiplicativity,
\[
  \|(Ev)_i\|_2
  \leq \sum_{j\neq i}\|E_{ij}\|_2\|v_j\|_2.
\]
Define $\alpha\in\mathbb{R}^C$ by $\alpha_j=\|v_j\|_2$ and the matrix
$A\in\mathbb{R}^{C\times C}$ by $a_{ii}=0$ and $a_{ij}=\|E_{ij}\|_2$ for $i\neq j$.
Then the previous inequality reads $\|(Ev)_i\|_2 \leq (A\alpha)_i$, so
\[
  \|Ev\|_2^2
  = \sum_{i=1}^C \|(Ev)_i\|_2^2
  \leq \sum_{i=1}^C (A\alpha)_i^2
  = \|A\alpha\|_2^2.
\]
Therefore $\|Ev\|_2 \leq \|A\|_2\|\alpha\|_2 = \|A\|_2\|v\|_2$.
Since $E$ is symmetric, $A$ is symmetric. Hence,
\[
  \|A\|_2 \leq \sqrt{\|A\|_1\|A\|_\infty} = \|A\|_\infty
  = \max_{1\leq i\leq C}\sum_{j\neq i}\|E_{ij}\|_2.
\]
Combining the above bounds and taking the supremum over $\|v\|_2=1$ yields
\[
  \|E\|_2 \leq \max_{1\leq i\leq C}\sum_{j\neq i}\|E_{ij}\|_2.
\]
\end{proof}

\section{Implementation details}\label{appendix:A}
\subsection{Implementation detail}
All methods are implemented using GPytorch \cite{gpytorch} and BoTorch \cite{botorch} framework, which benefit from auto-differentiation and available on GitHub \cite{codeGitHub}.
\subsection{Test functions}
For each test function, the input space will be described as a uniform random variable $X\sim \mathcal{U}(\mathcal{X})$, where $\mathcal{X}\subset\mathbb{R}^d$ is a compact set dependent on the function of study.
\subsubsection{Ishigami}
Let $a = 7$ and $b = 0.05$, we define the Ishigami function as :
\[\forall (x_1, x_2, x_3)\in [-\pi,\pi]^3,\quad f(x_1, x_2, x_3) = \sin(x_1) + a \sin(x_2)^2 + b x_3^4 \sin(x_1)\]
\subsubsection{G-Sobol}
In dimension $d \geq 1$, the G-Sobol function is defined as :
\[\forall \mathbf{x}\in [0,1]^d, \quad f(\mathbf{x}) = \prod_{i=1}^{d}\frac{\|4x_i-2\|+a_i}{1+a_i}\]
where $a \in (\mathbb{R}^+)^d$ is a vector to be specified. The first and total order Sobol indices are known is closed form for $A \subset\mathcal{P}(\{1,\dots, d\})$: 

\[S_A = \frac{V_A}{V}\]
with,
\begin{itemize}
    \item $V_A = \prod_{j\in A} V_j \quad\text{and}\quad V_j = \frac{1}{3(1+a_j)^2}$
\end{itemize}
\subsubsection{Hartmann}
For $d=4$, the Hartmann function is defined as \cite{picheny}:
\[\forall \mathbf{x}\in[0,1]^4,\quad f(\mathbf{x}) = \frac{1}{0.839}\left(1.1-\sum_{i=1}^{4} \alpha_i \exp\left( -\sum_{j=1}^{4} A_{ij} (x_j - P_{ij})^2 \right)\right)\]
with constants : 
\[
\boldsymbol{\alpha} = 
\begin{bmatrix}
1.0 & 1.2 & 3.0 & 3.2
\end{bmatrix},
\]
\[
A = 
\begin{bmatrix}
10 & 3 & 17 & 3.5 \\
0.05 & 10 & 17 & 0.1 \\
3 & 3.5 & 1.7 & 10 \\
17 & 8 & 0.05 & 10
\end{bmatrix},
\quad
P = 10^{-4} \cdot
\begin{bmatrix}
1312 & 1696 & 5569 & 124 \\
2329 & 4135 & 8307 & 3736 \\
2348 & 1451 & 3522 & 2883 \\
4047 & 8828 & 8732 & 5743
\end{bmatrix}.
\]
\subsubsection{Morris}

In dimension $d=20$,
\[\forall \mathbf{x}\in[0,1]^d,\quad f(\mathbf{x}) = \sum_{i=1}^{20} \beta_i w_i + \sum_{i<j} \beta_{ij} w_i w_j + \sum_{i<j<k} \beta_{ijk} w_i w_j w_k,\]
where:
\[
w_i =
\begin{cases}
2\left( \dfrac{1.1 x_i}{x_i + 0.1} - 0.5 \right), & \text{if } i \in \{3, 5, 7\} \\
2(x_i - 0.5), & \text{otherwise}
\end{cases}
\]
The coefficients are defined as follows:
\[
\beta_i = 
\begin{cases}
20, & \text{if } i \le 10 \\
(-1)^{i+1}, & \text{if } i > 10
\end{cases}
\]

\[
\beta_{ij} = 
\begin{cases}
-15, & \text{if } i \le 6 \text{ or } j \le 6 \\
(-1)^{i + j + 2}, & \text{otherwise}
\end{cases}
\]

\[
\beta_{ijk} =
\begin{cases}
-10, & \text{if } i \le 5 \text{ or } j \le 5 \text{ or } k \le 5 \\
0, & \text{otherwise}
\end{cases}
\]
\section{Additional numerical experiments}
\begin{figure}
    \centering
    \includegraphics[width=0.7\linewidth]{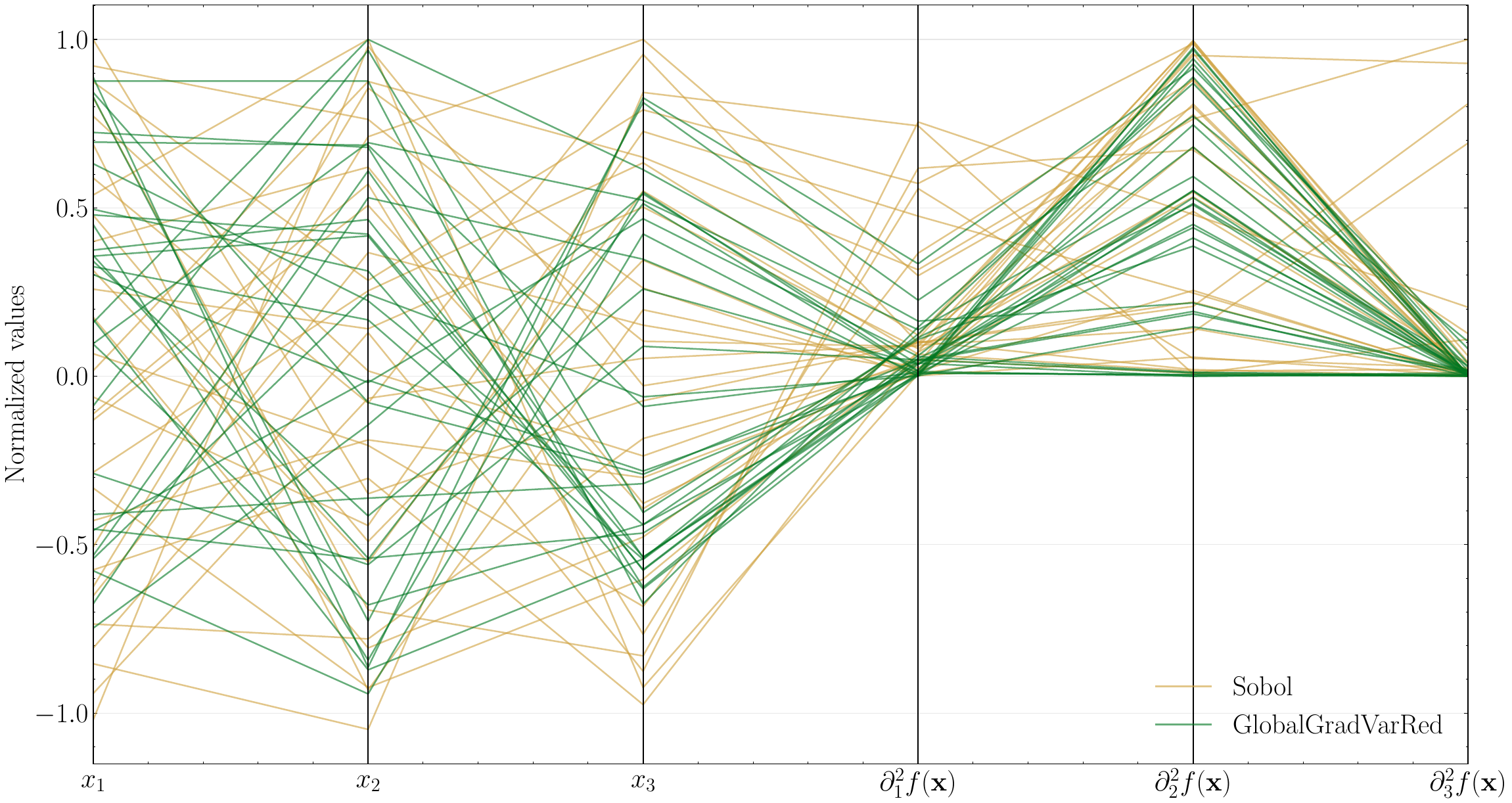}
    \caption{Parallel coordinate plot of Global Variance Reduction and Sobol methods on the Ishigami function. Last three columns correspond to the evaluation of the squared partial derivatives on the active learning points.}
    \label{fig:parallelPlotIshi}
\end{figure}
\begin{figure}
    \centering
    \includegraphics[width=0.7\linewidth]{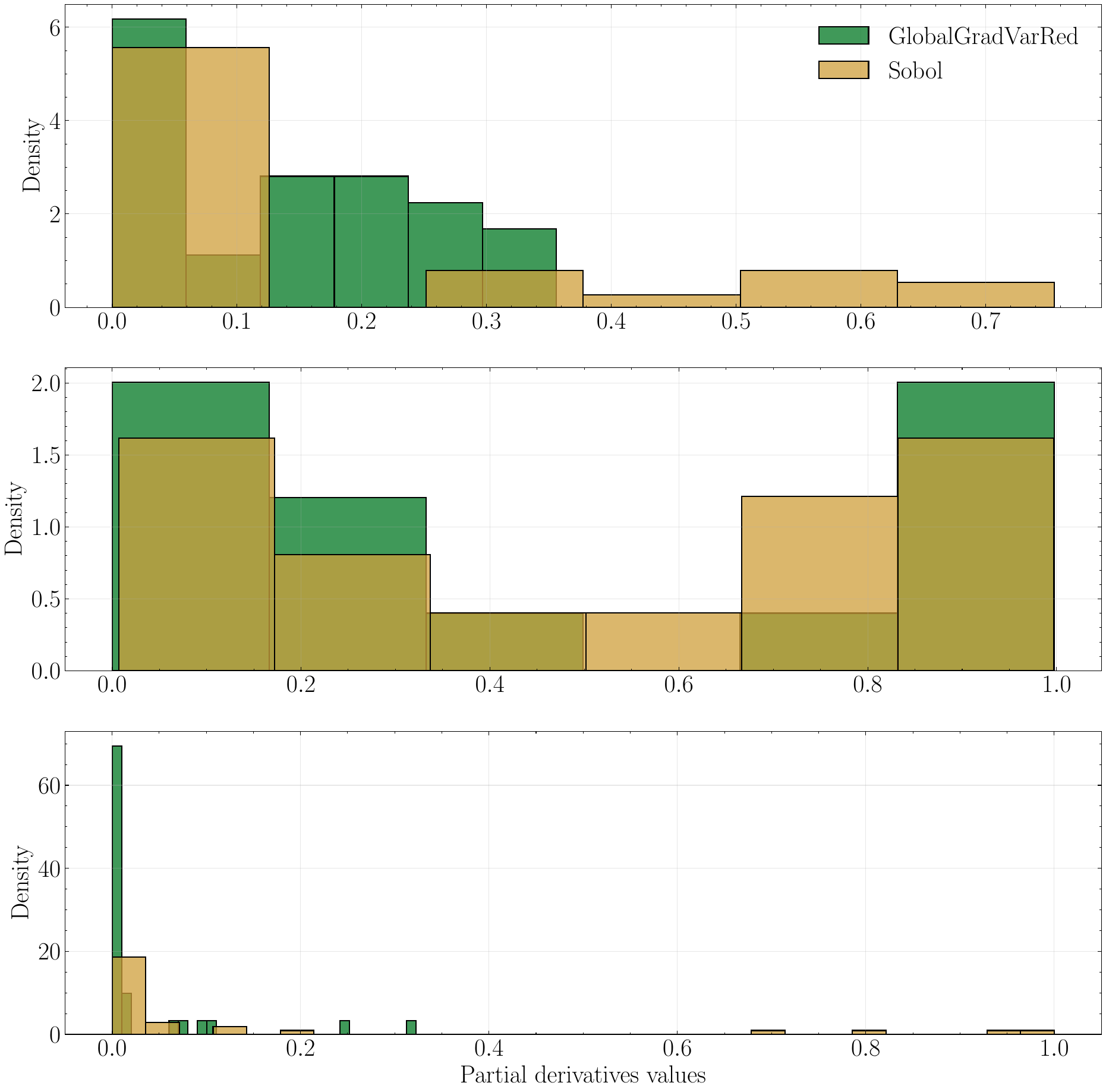}
    \caption{Histograms of repartition of (scaled) partial derivatives values between Sobol and Global Variance Reduction methods.}
    \label{fig:HistPartialIshi}
\end{figure}
\begin{figure}
    \centering
    \includegraphics[width=0.7\linewidth]{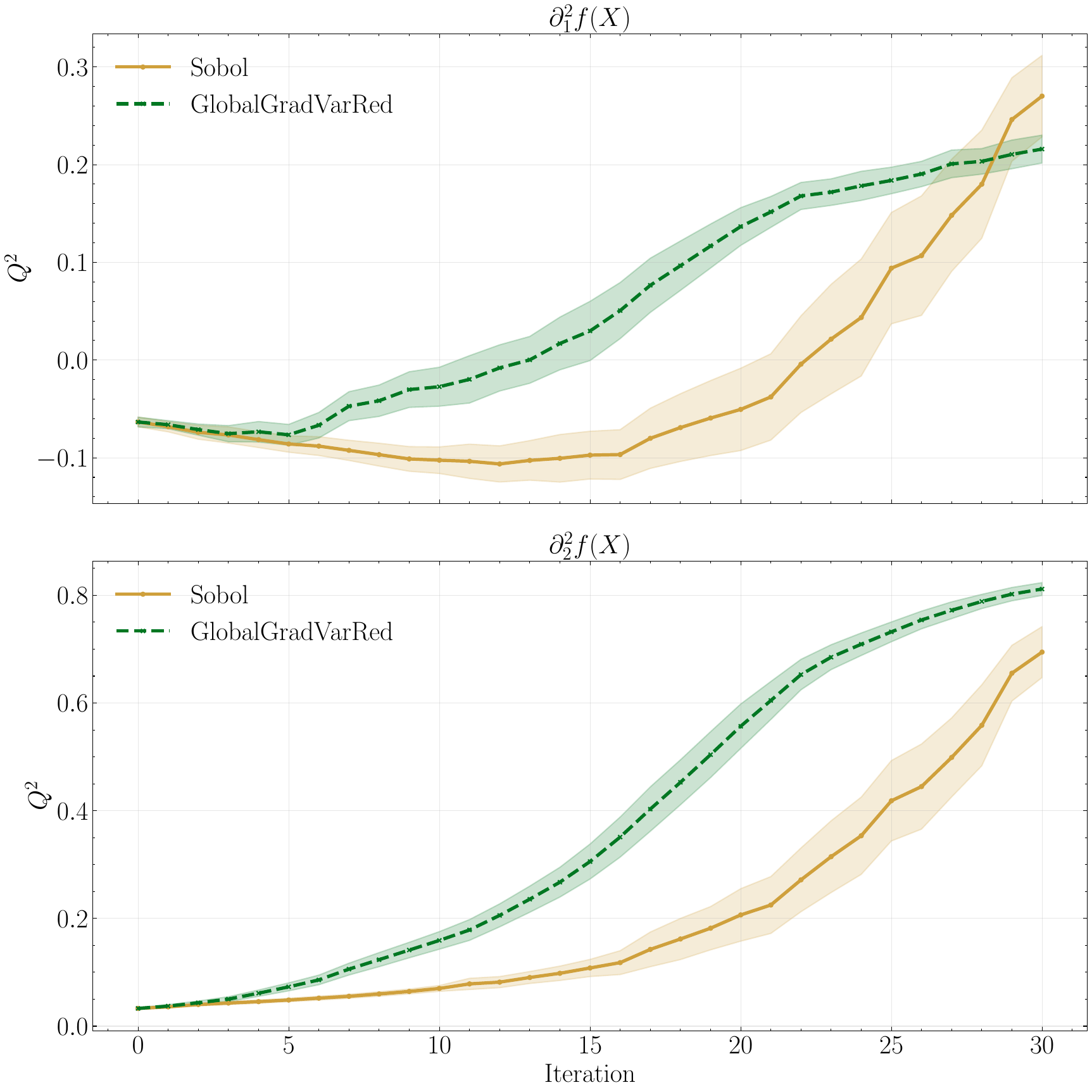}
    \caption{Coefficient of determination $Q^2$ on the first two partial derivatives of Ishigami function for Sobol and Global Variance reductions methods.}
    \label{fig:Q2partialIshi}
\end{figure}
\begin{figure}
    \centering
    \includegraphics[width=\linewidth]{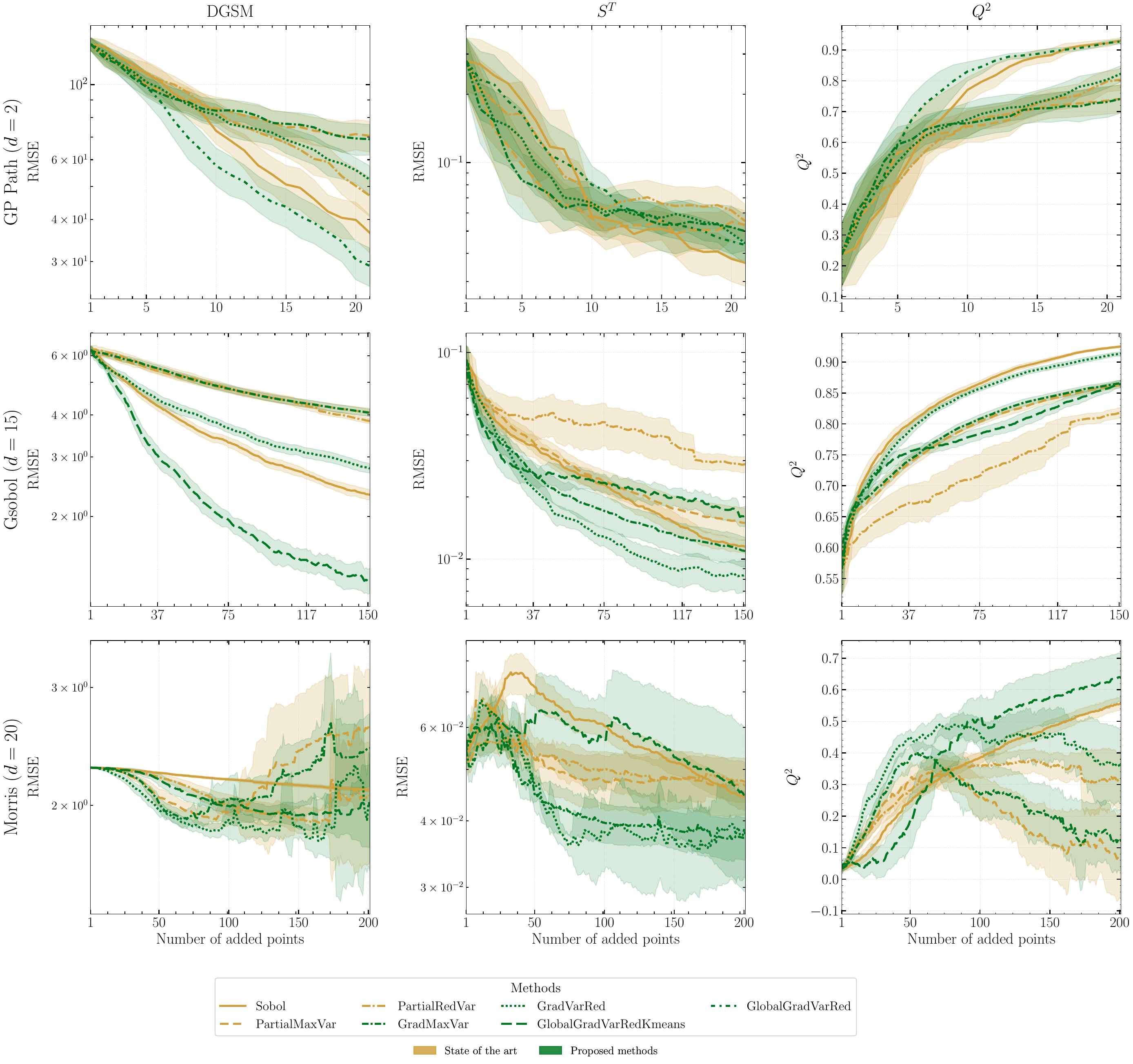}
    \caption{Active learning strategies on a selection of classical test functions.}
    \label{fig:othertoyfunctions}
\end{figure}

\end{document}